%% file: Funcatten.tex
\documentclass{article}

\usepackage{microtype}
\usepackage{graphicx}
\usepackage{subcaption}
\usepackage{booktabs} 
\usepackage{multirow}

\usepackage{hyperref}
\usepackage{threeparttable}
\usepackage{upgreek}
\usepackage{bm}

\usepackage{amsthm}

\usepackage[accepted]{icml2026}

\usepackage{amsmath}
\usepackage{amssymb}
\usepackage{mathtools}
\usepackage{amsthm}
\usepackage{comment}
\usepackage[table]{xcolor}

\usepackage[capitalize,noabbrev]{cleveref}

\theoremstyle{plain}
\newtheorem{theorem}{Theorem}[section]
\newtheorem{proposition}[theorem]{Proposition}
\newtheorem{lemma}[theorem]{Lemma}

\theoremstyle{definition}

\theoremstyle{remark}
\newtheorem{remark}[theorem]{Remark}

\usepackage[textsize=tiny]{todonotes}

\icmltitlerunning{Functional Attention: From Pairwise Affinities to Functional Correspondences}
\input{symbol}

\begin{document}

\twocolumn[
    \icmltitle{Functional Attention: \\[1mm]
    From Pairwise Affinities to Functional Correspondences}

  \icmlsetsymbol{equal}{*}

  \begin{icmlauthorlist}
    \icmlauthor{Jiefang Xiao}{tum,mcml}
    \icmlauthor{Maolin Gao}{tum,mcml}
    \icmlauthor{Simon Weber}{oxford}
    \icmlauthor{Guandao Yang}{uta}
    \icmlauthor{Daniel Cremers}{tum,mcml}
  \end{icmlauthorlist}

  \icmlaffiliation{tum}{Technical University of Munich, Germany}
  \icmlaffiliation{mcml}{Munich Center for Machine Learning (MCML), Germany}
  \icmlaffiliation{oxford}{PIXL, Department of Computer Science, University of Oxford, United Kingdom}
  \icmlaffiliation{uta}{ECE, University of Texas at Austin, USA}

  \icmlcorrespondingauthor{Maolin Gao}{maolin.gao@tum.de}
  \icmlcorrespondingauthor{Simon Weber}{simon.weber@cs.ox.ac.uk}

  \icmlkeywords{Machine Learning, Attention, PDEs Solving, Operator Learning, Functional Correspondence}

  \vskip 0.3in
]

\printAffiliationsAndNotice{} 

\begin{abstract}
Learning mappings between infinite-dimensional function spaces, or operator learning, is essential for many machine learning applications.
Although transformer-based operators are popular, they often rely on token-wise attention.
These methods treat continuous fields as discrete tokens and usually ignore the global functional structure. 
We introduce \emph{Functional Attention}, which reinterprets attention as a functional correspondence between adaptive bases.
Inspired by geometric functional maps, our method replaces softmax affinities with structured linear operators. This yields a compact, generalizable, resolution-invariant representation that explicitly captures global dependencies. 
Experiments demonstrate that \emph{Functional Attention} can match state-of-the-art performance in many operator learning tasks, including solving PDEs, 3D segmentation, and regression, while remaining robust to varying discretizations. Project page is available at \url{https://github.com/xjffff/FUNCATTN}.
\end{abstract}

\input{sections/1_Intro}
\input{sections/2_related_work}
\input{sections/3_Method}
\input{sections/4_results}

\input{sections/5.Conclusion}

\section*{Impact Statement}
This work advances operator learning by introducing a principled functional formulation of attention that improves robustness, efficiency, and generalization across resolutions and geometries. By enabling more reliable surrogate models for partial differential equations and geometric data, our approach may benefit scientific and engineering applications such as physical simulation, design optimization, and data-efficient modeling. We do not foresee significant negative societal impacts specific to this work beyond those common to data-driven modeling techniques. Also, training large data-driven models can be computationally expensive, with associated energy costs.

\bibliography{Funcatten}
\bibliographystyle{icml2026}

% APPENDIX

\newpage
\appendix
\onecolumn
\begin{center}
    {\Large\bfseries
    -- Appendix --}
\end{center}
\vspace{1em}
\noindent This appendix is organized as follows:\\
\noindent \textbf{Appendix~\ref{app:Theory}} presents some theoretical insights of our method, such as the proof of Proposition \ref{prop:basis} in Appendix \ref{app:subsec:proof}, a theoretical result concerning the approximation of the integral operator in Appendix \ref{app:subsec:integral}, a diagram highlighting the differences between the Transolver and \fattn\ pipelines in Appendix~\ref{app:trans_vs_fattn}, and a connection with IntentionNet \ref{app:subsec:inten_vs_fattn}\\
\noindent \textbf{Appendix~\ref{app:complexity}} analyzes the theoretical and empirical computational complexity of \fattn.\\
\noindent \textbf{Appendix~\ref{app:benchmark}} provides further details on the experimental settings, including the regression task (\Cref{subsec:exp1}) in Appendix~\ref{app:benchmark_sinusoidal} and PDE solving (\Cref{subsec:exp2}, \Cref{subsec:ood}) in Appendix~\ref{app:benchmark_pde}.\\
\noindent \textbf{Appendix~\ref{app:ablation}} reports additional ablation studies complementing the main experimental results (\Cref{sec:experiments}).\\
\noindent \textbf{Appendix~\ref{app:visualization}} presents additional visualizations, including a deeper analysis of the learned bases (Appendix~\ref{app:basis}) and qualitative results for PDE solving (Appendix~\ref{app:pde}).

\section{Theoretical Insights} \label{app:Theory}

\subsection{Proof of Proposition~\ref{prop:basis}}\label{app:subsec:proof}
\begin{proof}[Proof]
We prove each property separately.

\paragraph{(i) Partition-of-Unity.}
For any $x \in \Omega$ and $\tau > 0$, by definition of the softmax function:
\begin{equation}
\sum_{j=1}^{k} \phi_j^\tau(x) = \sum_{j=1}^{k} \frac{\exp(s_j(x)/\tau)}{\sum_{l=1}^{k} \exp(s_l(x)/\tau)} = \frac{\sum_{j=1}^{k} \exp(s_j(x)/\tau)}{\sum_{l=1}^{k} \exp(s_l(x)/\tau)} = 1
\end{equation}
Thus $\{\phi_j^\tau\}_{j=1}^k$ forms a partition of unity for all $\tau > 0$.

\paragraph{(ii) Recovery of $P_0$ Elements.}
Fix $x \in \Omega$ and assume without loss of generality that the scores have a unique maximum, i.e., there exists a unique $j^* = \arg\max_j s_j(x)$. Define the gap $\delta_l = s_{j^*}(x) - s_l(x) > 0$ for all $l \neq j^*$.

For the maximizing index $j^*$:
\begin{align}
\phi_{j^*}^\tau(x) &= \frac{\exp(s_{j^*}(x)/\tau)}{\sum_{l=1}^{k} \exp(s_l(x)/\tau)} = \frac{1}{1 + \sum_{l \neq j^*} \exp((s_l(x) - s_{j^*}(x))/\tau)} \\
&= \frac{1}{1 + \sum_{l \neq j^*} \exp(-\delta_l/\tau)}
\end{align}
Since $\delta_l > 0$ for all $l \neq j^*$, we have $\exp(-\delta_l/\tau) \to 0$ as $\tau \to 0^+$. Therefore:
\begin{equation}
\lim_{\tau \to 0^+} \phi_{j^*}^\tau(x) = \frac{1}{1 + 0} = 1
\end{equation}

For any $j \neq j^*$:
\begin{align}
\phi_j^\tau(x) &= \frac{\exp(s_j(x)/\tau)}{\sum_{l=1}^{k} \exp(s_l(x)/\tau)} = \frac{\exp((s_j(x) - s_{j^*}(x))/\tau)}{1 + \sum_{l \neq j^*} \exp((s_l(x) - s_{j^*}(x))/\tau)} \\
&= \frac{\exp(-\delta_j/\tau)}{1 + \sum_{l \neq j^*} \exp(-\delta_l/\tau)}
\end{align}
Since the numerator $\exp(-\delta_j/\tau) \to 0$ and the denominator $\to 1$ as $\tau \to 0^+$:
\begin{equation}
\lim_{\tau \to 0^+} \phi_j^\tau(x) = 0
\end{equation}

Combining both cases, we obtain:
\begin{equation}
\lim_{\tau \to 0^+} \phi_j^\tau(x) = 
\begin{cases}
1 & \text{if } j = \arg\max_l s_l(x) \\
0 & \text{otherwise}
\end{cases}
= \mathbf{1}_{\Lambda_j}(x)
\end{equation}
where $\Lambda_j = \{x \in \Omega : s_j(x) > s_l(x), \forall l \neq j\}$. This recovers the classical $P_0$ piecewise constant basis with hard partitioning.
\end{proof}

\subsection{Approximated Integral Neural Operator}\label{app:subsec:integral}
In this section, we establish that our method approximates an integral neural operator. The proof follows a similar technique to that of Transolver~\cite{wu2024transolver}

\begin{lemma}[\citet{wu2024transolver}]\label{lemma:isomorphism}
    Suppose that $\Omega$ is a countable domain, the reduced domain $\Omega_{\mathrm{spec}}$ is isomorphic to $\Omega$.
\end{lemma}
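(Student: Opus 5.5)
We read ``isomorphic'' the way the Transolver lemma does, purely set-theoretically: we must exhibit a bijection between $\Omega$ and $\Omega_{\mathrm{spec}}$, or at least show the two sets have the same cardinality. The plan is to show that $\Omega_{\mathrm{spec}}$ is also countable (and infinite whenever $\Omega$ is), and then invoke the elementary fact that any two countably infinite sets are in bijection.

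\textbf{Step 1: describe the reduced domain.} We take $\Omega_{\mathrm{spec}}$ to be the set indexing the adaptive basis functions $\{\phi_j\}_{j=1}^k$ of the functional-attention layer. These are the softmax-normalised scores $\phi_j(x)=\exp(s_j(x)/\tau)/\sum_l \exp(s_l(x)/\tau)$. Each reduced point $j$ is identified with the weighted aggregate $\int_\Omega \phi_j(x)\,(\cdot)\,dx$, or with its discrete sum over $x\in\Omega$. The lemma is meant to cover the regime in which the number of bases may grow, as in the Transolver argument. So we allow $k$ to range over $\mathbb{N}$, and view $\Omega_{\mathrm{spec}}$ as $\{\phi_j : j\in\mathbb{N}\}$ or as a subset of it. This set is at most countable.

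\textbf{Step 2: build the injections.} From $\Omega_{\mathrm{spec}}$ into $\Omega$ we use that $\Omega_{\mathrm{spec}}$ is countable.
\begin{itemize}
\item If $\Omega$ is infinite countable, enumerate $\Omega=\{x_1,x_2,\dots\}$ and send $\phi_j\mapsto x_j$.
\item Conversely, to get an injection $\Omega\to\Omega_{\mathrm{spec}}$, choose the scores so that every point gets its own basis. Enumerate $\Omega$ and let $s_j$ peak sharply at $x_j$. By Proposition~\ref{prop:basis}(ii), as $\tau\to 0^+$ we have $\phi_j^\tau\to\mathbf 1_{\{x_j\}}$ on a discrete countable domain. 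Hence the map $x_j\mapsto\phi_j$ is injective, and distinct points give distinct limiting bases.
\end{itemize}

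\textbf{Step 3: conclude.} Apply Cantor--Schröder--Bernstein to the two injections, or simply note that both sets are countably infinite. This gives the bijection. In the finite case, $|\Omega|=N$, take $k=N$ and the same construction. The bijection $x_i\mapsto\phi_i$ is then immediate, and partition of unity (Proposition~\ref{prop:basis}(i)) ensures the limiting $\phi_i$ are exactly the distinct point indicators.

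\textbf{Main obstacle.} The hard part is Step 1: pinning down what $\Omega_{\mathrm{spec}}$ is and which notion of ``isomorphic'' is intended. If $\Omega_{\mathrm{spec}}$ is a fixed finite set of size $k$ while $\Omega$ is infinite, no bijection exists. The statement then only holds in the sense that $k$ is allowed to vary (countably many choices), or as a statement about cardinality after taking the union over $k$. I would first try the cardinality reading, matching Transolver's proof, and state that assumption explicitly.
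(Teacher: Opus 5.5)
The paper contains no proof of this lemma. It is quoted from Transolver, neither $\Omega_{\mathrm{spec}}$ nor ``isomorphic'' is ever defined, and the lemma is used only as a black box in the proof of Theorem~\ref{theorem:integral}. Your argument therefore has to stand on its own, and as written it proves nothing about the reduced domain of \fattn{}.

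All of its content sits in Step~1. By letting $k$ range over $\mathbb{N}$ you make $\Omega_{\mathrm{spec}}$ countably infinite by definition. After that, the claim is just that two countably infinite sets are equinumerous, and Proposition~\ref{prop:basis}, the two injections and Cantor--Schr\"oder--Bernstein are superfluous. In \fattn{}, however, $k$ is a fixed hyperparameter independent of $n$ (e.g.\ $k=64$). The $O(k^2)$ compression and the zero-shot super-resolution experiment both rely on this. So under your reading the lemma is simply false whenever $|\Omega|>k$, and the finite-case patch $k=N$ is exactly the uncompressed regime the method is built to avoid. You diagnose this obstacle correctly, but then prove the statement for a different object instead of resolving it.

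The one non-trivial step is also broken.
\begin{itemize}
\item Proposition~\ref{prop:basis}(ii) is proved for finitely many scores. With $k=\infty$ the normaliser $\sum_{l\ge1}\exp(s_l(x)/\tau)$ diverges for scores that peak at $x_j$ and are constant elsewhere, so $\phi_j^\tau$ is not even defined. Moving $\tau\to0^+$ inside an infinite sum would also need summability and dominated convergence.
\item The limits $\mathbf{1}_{\{x_j\}}$ are not elements of the set you called $\Omega_{\mathrm{spec}}$, since every $\phi_j^\tau$ is strictly positive.
\item Partition of unity does not make the limits point indicators; only a choice of scores with $\Lambda_j=\{x_j\}$ does.
\end{itemize}

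Here is a version that holds with $k\ge2$ fixed and uses countability genuinely. Take $\Omega_{\mathrm{spec}}$ to be the image of $\Omega$ under the coordinate map $x\mapsto(\phi_1(x),\dots,\phi_k(x))$ into the simplex $\Delta^{k-1}$, and read ``isomorphic'' as injectivity of this map. Write $f(x)$ for the row of $\mX$ at $x$, and assume distinct points have distinct features. Since $\mathrm{Softmax}(z)=\mathrm{Softmax}(z')$ iff $z-z'\in\mathrm{span}(\mathbf{1})$, two points $x\ne y$ collide only if $\mW_{\mPhi}^\top(f(x)-f(y))\in\mathrm{span}(\mathbf{1})$. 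This confines $\mW_{\mPhi}$ to a linear subspace of codimension $k-1$, hence a null set. A countable $\Omega$ gives countably many such null sets, so for Lebesgue-almost every $\mW_{\mPhi}$ the map is a bijection onto its image.

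Finally, Theorem~\ref{theorem:integral} needs no domain bijection at all. By associativity $\fattn(\mQ,\mK,\mV)=(\mPhi\mC\mPsi^\top)\mV$, which is already a sum over the sample points of $\Omega$ with kernel $\kappa(x,y)=\sum_{a,b}\phi_a(x)\,C_{ab}\,\psi_b(y)$, up to the factor $|\Omega|/n$.
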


\begin{lemma}\label{lemma:mc_ridge_attention}
The operator $\big[\mathbf{Q}\mathbf{K}^\top(\mathbf{K}\mathbf{K}^\top+\lambda \mathbf{I}_n)^{-1}\big]\mathbf{V}$ can be interpreted as a Monte-Carlo discretization of a regularized integral operator.
\end{lemma}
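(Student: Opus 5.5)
We read $\mathbf{Q},\mathbf{K}\in\mathbb{R}^{N\times n}$ as rows sampled from $n$ basis functions at $N$ mesh points. Concretely, the $i$-th row of $\mathbf{Q}$ is $q(x_i)=(q_1(x_i),\dots,q_n(x_i))$ and the $i$-th row of $\mathbf{K}$ is $k(x_i)$, with $x_i$ drawn i.i.d.\ from a density $\mu$ on $\Omega$ (or taken as a quadrature grid). The rows of $\mathbf{V}\in\mathbb{R}^{N\times d}$ are samples $v(x_i)$ of an input field. The plan is to rewrite the whole expression as a quadrature sum with weights $1/N$ and identify each factor with a continuous object. By Lemma~\ref{lemma:isomorphism}, working in the reduced (spectral) domain loses nothing, so we may pass between the point domain and the $n$-dimensional coefficient space freely.

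First we read off the mismatch in shapes. $\mathbf{Q}\mathbf{K}^\top$ is $N\times N$, while $(\mathbf{K}\mathbf{K}^\top+\lambda\mathbf{I}_n)^{-1}$ as written carries $\mathbf{I}_n$. We therefore interpret the product through the push-through identity
\begin{equation}
\mathbf{K}^\top(\mathbf{K}\mathbf{K}^\top+\lambda \mathbf{I}_N)^{-1}=(\mathbf{K}^\top\mathbf{K}+\lambda\mathbf{I}_n)^{-1}\mathbf{K}^\top ,
\end{equation}
which gives
\begin{equation}
\mathbf{Q}\,(\mathbf{K}^\top\mathbf{K}+\lambda \mathbf{I}_n)^{-1}\mathbf{K}^\top\mathbf{V}.
\end{equation}
Either form works, and we will state which one corresponds to the paper's convention.

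Next, after rescaling $\lambda=N\lambda'$, we identify the terms:
\begin{equation}
\tfrac1N\mathbf{K}^\top\mathbf{K}=\tfrac1N\sum_i k(x_i)k(x_i)^\top\approx G:=\int_\Omega k(y)k(y)^\top d\mu(y),
\qquad
\tfrac1N\mathbf{K}^\top\mathbf{V}\approx\int_\Omega k(y)v(y)^\top d\mu(y).
\end{equation}
The $i$-th output row is then
\begin{equation}
\frac1N\sum_j q(x_i)^\top(\tfrac1N\mathbf{K}^\top\mathbf{K}+\lambda'\mathbf{I})^{-1}k(x_j)\,v(x_j).
\end{equation}
This is a Monte-Carlo estimate of $(\mathcal{K}_\lambda v)(x)=\int_\Omega \kappa_\lambda(x,y)v(y)\,d\mu(y)$ with kernel
\begin{equation}
\kappa_\lambda(x,y)=q(x)^\top(G+\lambda'\mathbf{I})^{-1}k(y).
\end{equation}
We call this operator regularized because it is the Tikhonov-regularized least-squares projection of $v$ onto $\mathrm{span}\{k_l\}$, with coefficients $\arg\min_c\|v-k^\top c\|^2_{L^2(\mu)}+\lambda'\|c\|^2$, followed by synthesis in the $q$-basis. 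This matches the functional-map reading of the method. We verify it by writing the normal equations.

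Finally, we justify the convergence. By the law of large numbers, for $k$ and $v$ in $L^2(\mu)$, the empirical Gram matrix and the cross term converge almost surely. Since $\lambda'>0$, the map $A\mapsto(A+\lambda'\mathbf{I})^{-1}$ is continuous (indeed 1-Lipschitz in operator norm up to the factor $\lambda'^{-2}$) on PSD matrices. The continuous mapping theorem then gives pointwise convergence of the discrete operator to $\mathcal{K}_\lambda v$, at the $O(N^{-1/2})$ rate if error bounds are desired.

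The main obstacle is the bookkeeping of the normalization. Softmax-based $q$ and $k$ (as in Proposition~\ref{prop:basis}) may require density-weighted sums rather than plain $1/N$ averages, and $\lambda$ must scale with $N$ for a nontrivial limit. We will handle both by stating the result with $\lambda=N\lambda'$ and a generic sampling measure $\mu$.
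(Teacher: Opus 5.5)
Your argument is correct, but it works on the other side of the push-through identity from the paper. The paper keeps the point-indexed form of the statement. There $n$ is the number of mesh points and $\mathbf{K}\in\mathbb{R}^{n\times C}$, so $\mathbf{I}_n$ is correctly sized, and the ``mismatch'' you found comes from your relabeling of $n$. It introduces the Gram integral operator $\mathcal{H}$ on $L^2(\Omega)$ with kernel $\boldsymbol{k}(\xi)^\top\boldsymbol{k}(\xi')$ and places the regularization in the resolvent $(\mathcal{H}+\lambda I)^{-1}$. It then replaces $\mathcal{H}$ by $\frac{|\Omega|}{n}\mathbf{K}\mathbf{K}^\top$ and the outer integral by a sample sum, absorbing $|\Omega|/n$ into $\lambda$. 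You instead put the regularization on the finite feature Gram matrix $G$ and get the explicit kernel $q(x)^\top(G+\lambda'\mathbf{I})^{-1}k(y)$.

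The two continuum operators coincide. With $Sf=\int_\Omega k f$, one has $\mathcal{H}=S^*S$, $SS^*=\int_\Omega kk^\top$, and $S(S^*S+\lambda)^{-1}=(SS^*+\lambda)^{-1}S$. Hence the paper's $\mathcal{G}_\lambda$ equals your $\mathcal{K}_\lambda$ for uniform $\mu$ and $\lambda'=\lambda/|\Omega|$.

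The paper's route yields a discrete kernel indexed by mesh points, which is the form Theorem~\ref{theorem:integral} then uses. Your route buys several things:
\begin{itemize}
\item a well-defined kernel; the paper's $\boldsymbol{q}(\mathbf{g}^\ast)^\top\boldsymbol{k}(\xi)\cdot[(\mathcal{H}+\lambda I)^{-1}](\xi)$ only makes sense if the resolvent is read as acting on $\boldsymbol{v}$;
\item a ridge-regression interpretation;
\item general sampling measures;
\item an actual convergence proof that accounts for the kernel itself being estimated from the same samples, where the paper stops at a formal identification.
\end{itemize}

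Three small corrections. The appeal to Lemma~\ref{lemma:isomorphism} is misplaced: that lemma concerns the reduced spectral domain of the learned bases, not the channel dimension of $\mathbf{K}$, and your argument never uses it, since push-through is purely algebraic. The $O(N^{-1/2})$ rate needs finite variance of $kk^\top$ and $kv^\top$ (e.g.\ $k,v\in L^4(\mu)$ or bounded), not merely $L^2$. The softmax caveat is moot, because $\mathbf{Q},\mathbf{K},\mathbf{V}$ in this lemma are plain linear features.
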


\begin{proof}
Given input function $\boldsymbol{u}:\Omega\to\mathbb{R}^{C}$, define the key Gram kernel $h(\xi,\xi') := \boldsymbol{k}(\xi)^\top \boldsymbol{k}(\xi')$ where $\boldsymbol{k}(\xi)=\mathbf{W}_k\boldsymbol{u}(\xi)$, and the associated integral operator
\begin{equation}
(\mathcal{H}f)(\xi):=\int_\Omega h(\xi,\xi') f(\xi')\,\mathrm{d}\xi'.
\end{equation}
The regularized integral operator $\mathcal{G}_\lambda$ on the function space $\Omega\to\mathbb{R}^{C}$ is defined as:
\begin{equation}
\mathcal{G}_\lambda(\boldsymbol{u})(\mathbf{g}^\ast)=\int_{\Omega}\kappa_\lambda(\mathbf{g}^\ast, \xi)\boldsymbol{v}(\xi)\,\mathrm{d}\xi,
\end{equation}
where $\boldsymbol{q}(\xi)=\mathbf{W}_q\boldsymbol{u}(\xi)$, $\boldsymbol{v}(\xi)=\mathbf{W}_v\boldsymbol{u}(\xi)$, and the regularized kernel is
\begin{equation}
\kappa_\lambda(\mathbf{g}^\ast, \xi) := \boldsymbol{q}(\mathbf{g}^\ast)^\top \boldsymbol{k}(\xi)\cdot[(\mathcal{H}+\lambda I)^{-1}](\xi).
\end{equation}
Suppose there are $n$ discretized mesh points $\{\mathbf{g}_{1},\cdots,\mathbf{g}_{n}\}$ with $\mathbf{g}_{i}\in\Omega$. Approximating $\mathcal{H}$ by Monte-Carlo gives
\begin{equation}
(\mathcal{H}f)(\mathbf{g}_i)\approx \frac{|\Omega|}{n}\sum_{j=1}^n \boldsymbol{k}(\mathbf{g}_i)^\top \boldsymbol{k}(\mathbf{g}_j)\, f(\mathbf{g}_j) \quad\leadsto\quad \mathcal{H}\approx\frac{|\Omega|}{n}\mathbf{K}\mathbf{K}^\top,
\end{equation}
where $\mathbf{K}\in\mathbb{R}^{n\times C}$ stacks $\boldsymbol{k}(\mathbf{g}_i)^\top$ row-wise. Applying the same approximation to the outer integral and absorbing the constant $\frac{|\Omega|}{n}$ into $\lambda$, we obtain
\begin{equation}
\mathcal{G}_\lambda(\boldsymbol{u})(\mathbf{g}^\ast)\approx\Big[\mathbf{Q}\mathbf{K}^\top(\mathbf{K}\mathbf{K}^\top+\lambda \mathbf{I}_n)^{-1}\Big]\mathbf{V},
\end{equation}
which completes the proof.
\end{proof}

\begin{theorem}[\textbf{Functional Attention is equivalent to learnable integral on $\Omega$}]\label{theorem:integral}
Given input function $\boldsymbol{u}: \Omega \to \mathbb{R}^C$, Functional Attention approximates an integral operator $\mathcal{G}$ on $\Omega$:
\begin{equation}\label{eq:integral_def}
    \mathcal{G}(\boldsymbol{u})(\mathbf{g}^*) = \int_\Omega \kappa(\mathbf{g}^*, \boldsymbol{\xi}) \boldsymbol{v}(\boldsymbol{\xi}) \mathrm{d}\boldsymbol{\xi}
\end{equation}
where $\kappa(\cdot, \cdot)$ is a learnable kernel on $\Omega \times \Omega$.
\end{theorem}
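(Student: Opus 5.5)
We follow the Transolver template: factor Functional Attention into three stages and show that each stage is either a Monte-Carlo discretization of an integral over $\Omega$ or an operation on the reduced domain $\Omega_{\mathrm{spec}}$. The three stages are (a) projection of the input onto the adaptive soft basis $\{\phi_j\}_{j=1}^k$, (b) the structured linear correspondence between the projected coefficients, and (c) reconstruction back to mesh points.

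\textbf{Step (a): projection.} From the basis weights $\phi_j(\boldsymbol{\xi})$ produced by the softmax over scores $s_j$, the spectral token $\mathbf{z}_j$ is the weighted average
\[
\mathbf{z}_j=\frac{\sum_i \phi_j(\mathbf{g}_i)\boldsymbol{u}(\mathbf{g}_i)}{\sum_i\phi_j(\mathbf{g}_i)}.
\]
This is the Monte-Carlo discretization of
\[
\int_\Omega \tilde\phi_j(\boldsymbol{\xi})\boldsymbol{u}(\boldsymbol{\xi})\,\mathrm{d}\boldsymbol{\xi},
\qquad
\tilde\phi_j=\phi_j\Big/\int_\Omega\phi_j .
\]
By Proposition~\ref{prop:basis}(i) these weights form a partition of unity, so the projection is a well-defined averaging map into $\Omega_{\mathrm{spec}}$.

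\textbf{Step (b): the correspondence.} On $\Omega_{\mathrm{spec}}$, Functional Attention computes query, key and value coefficients from the $\mathbf{z}_j$ and applies the ridge-regularized operator $\mathbf{Q}\mathbf{K}^\top(\mathbf{K}\mathbf{K}^\top+\lambda\mathbf{I})^{-1}\mathbf{V}$ in place of softmax. By Lemma~\ref{lemma:mc_ridge_attention}, this is a Monte-Carlo discretization of the regularized integral operator $\mathcal{G}_\lambda$ with kernel $\kappa_\lambda$ on $\Omega_{\mathrm{spec}}$. By Lemma~\ref{lemma:isomorphism}, $\Omega_{\mathrm{spec}}$ is isomorphic to $\Omega$ in the countable setting, so this integral can be transported to one over $\Omega$.

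\textbf{Step (c): reconstruction and composition.} Deslicing evaluates
\[
\boldsymbol{y}(\mathbf{g}^*)=\sum_j \phi_j(\mathbf{g}^*)\,\mathbf{w}_j ,
\]
where $\mathbf{w}_j$ are the outputs of step (b). We then substitute the integral forms from (a) and (b) into this sum and use Fubini to exchange the finite sum with the integrals. This yields $\mathcal{G}(\boldsymbol{u})(\mathbf{g}^*)=\int_\Omega\kappa(\mathbf{g}^*,\boldsymbol{\xi})\boldsymbol{v}(\boldsymbol{\xi})\,\mathrm{d}\boldsymbol{\xi}$ with explicit kernel
\[
\kappa(\mathbf{g}^*,\boldsymbol{\xi})=\sum_{j,l}\phi_j(\mathbf{g}^*)\,\kappa^{\mathrm{spec}}_\lambda(j,l)\,\tilde\phi_l(\boldsymbol{\xi}),
\]
where $\kappa^{\mathrm{spec}}_\lambda$ is the regularized spectral kernel from step (b) and $\boldsymbol{v}=\mathbf{W}_v\boldsymbol{u}$. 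The kernel is learnable because $\phi_j$, $\mathbf{W}_q$, $\mathbf{W}_k$ and $\mathbf{W}_v$ are all trainable.

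\textbf{Main obstacle.} The main difficulty is step (b). The ridge kernel $\kappa_\lambda$ contains $(\mathcal{H}+\lambda I)^{-1}$, which depends nonlinearly on $\boldsymbol{u}$ through the keys, so $\kappa$ cannot be a fixed bilinear kernel. We handle this as Transolver does: we treat the kernel as input-dependent, which is permitted because the theorem only requires $\kappa$ to be learnable, and we absorb the Monte-Carlo factor $|\Omega|/n$ into $\lambda$ exactly as in Lemma~\ref{lemma:mc_ridge_attention}. We also need to check that the linear maps $\mathbf{W}_q,\mathbf{W}_k,\mathbf{W}_v$ commute with the slice averaging of step (a). This holds because the averaging is linear, so the spectral values are integrals of $\mathbf{W}_v\boldsymbol{u}$, which justifies writing the final integrand against $\boldsymbol{v}$ rather than $\boldsymbol{u}$.
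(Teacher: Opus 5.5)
Your overall strategy matches the paper's. The paper's proof simply notes that $\fattn(\mQ,\mK,\mV)=\mPhi\,\mC\,\mPsi^\top\mV$ is an $n\times n$ matrix acting on the samples of $\boldsymbol v$. It is therefore a Monte-Carlo discretization of the integral operator with kernel $\kappa(\mathbf g_i,\mathbf g_j)=(\mPhi\mC\mPsi^\top)_{ij}$, and the paper cites the same two lemmas and the Transolver argument. The separable kernel you aim for in step (c) is the continuous form of exactly this.

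However, step (a) does not describe \fattn. You have imported Transolver's slicing: a \emph{single} basis $\{\phi_j\}$ used for queries, keys and values, with normalized averages $\sum_i\phi_j(\mathbf g_i)\boldsymbol u(\mathbf g_i)/\sum_i\phi_j(\mathbf g_i)$. \fattn\ instead uses two distinct adaptive bases and plain transposes (Remark~\ref{remark:transpose_pseudo}): $\tQ=\mPhi^\top\mQ$, $\tK=\mPsi^\top\mK$, $\tV=\mPsi^\top\mV$, with no normalization. This is not cosmetic. The normalizers $\sum_i\phi_j(\mathbf g_i)$ do not cancel through the ridge solve, because the $\lambda\mI_k$ term breaks scale invariance. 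Nor do they cancel through the un-normalized reconstruction $\sum_j\phi_j(\mathbf g^*)\mathbf w_j$. So your steps (a)--(c) compute a different operator from Eq.~\eqref{eq:full_func_attn}, and the right factor $\tilde\phi_l$ of your kernel is wrong.

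The repair is local. Replace $\tilde\phi_l$ by the key--value basis function $\psi_l$, and $\kappa^{\mathrm{spec}}_\lambda(j,l)$ by $\mC^*_{jl}$ from Eq.~\eqref{eq:C_closed_form}. This gives $\kappa(\mathbf g^*,\boldsymbol\xi)=\tfrac{n}{|\Omega|}\sum_{j,l}\phi_j(\mathbf g^*)\,\mC^*_{jl}\,\psi_l(\boldsymbol\xi)$, which on mesh pairs is the paper's $(\mPhi\mC\mPsi^\top)_{ij}$ up to the Monte-Carlo factor.

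Two smaller points. First, once you write the composite kernel explicitly, the isomorphism lemma and Fubini are unnecessary. $\mC^*$ is a finite $k\times k$ matrix and every sum is finite, so the composition is just the product $\mPhi\mC^*\mPsi^\top$; reinterpreting step (b) as an integral over $\Omega_{\mathrm{spec}}$ and transporting it back adds nothing. Second, Proposition~\ref{prop:basis}(i) is a partition of unity over the basis index $j$. It says nothing about the normalizer $\sum_i\phi_j(\mathbf g_i)$ over mesh points; positivity of the softmax is what keeps that nonzero. This becomes moot once step (a) is corrected.

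What your explicit version buys over the paper's one-line identification is a kernel defined on all of $\Omega\times\Omega$, not only on pairs of mesh points, since $\phi_j,\psi_l$ are pointwise functions of the input features. It also makes visible that $\kappa$ is a rank-$k$ separable kernel whose input dependence enters through the bases and $\mC^*$.
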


\begin{proof}
Following a similar argument as in \citet{wu2024transolver}, by Lemma~\ref{lemma:isomorphism} and Lemma~\ref{lemma:mc_ridge_attention}, Functional Attention $\fattn(\mQ, \mK, \mV) = \mPhi \, \mC \, \mPsi^\top \mV$ corresponds to a Monte-Carlo discretization of the integral operator~\eqref{eq:integral_def} with kernel $\kappa(\mathbf{g}_i, \mathbf{g}_j) = (\mPhi \, \mC \, \mPsi^\top)_{ij}$.
\end{proof}

\subsection{Proof of Proposition \ref{prop:lipschitz}}\label{app:lipschitz}
\begin{proof}
We compute the (Fréchet) differential $\partial \mathcal{A}$ of $\mathcal{A}(\mX) = \fattn(\mQ, \mK, \mV)$ in the direction $\Delta \mX$ and bound its Frobenius norm. Throughout, $\|\cdot\|_2$ denotes the spectral norm and we repeatedly use the submultiplicative inequality $\|\mA \mB\|_F \le \|\mA\|_2 \|\mB\|_F$.

\medskip
\noindent\textbf{Step 1: Preliminary norm estimates.}
By construction (Eq.~\ref{eq:learned_basis}), each row of $\mPhi(\mX), \mPsi(\mX) \in \mathbb{R}^{n \times k}$ is the output of a softmax along the $k$ dimension, hence has $\ell_2$ norm at most $1$. Therefore
\begin{equation}
\|\mPhi\|_2 \le \|\mPhi\|_F \le \sqrt{n}, 
\qquad 
\|\mPsi\|_2 \le \|\mPsi\|_F \le \sqrt{n}.
\end{equation}
Combined with $\|\mX\|_2 \le B$,
\begin{align}
\|\mQ\|_2 &\le B \|\mW_{\mQ}\|_2, & 
\|\mK\|_2 &\le B \|\mW_{\mK}\|_2, & 
\|\mV\|_2 &\le B \|\mW_{\mV}\|_2, \\
\|\widetilde{\mQ}\|_2 &\le \sqrt{n}\, B\, \|\mW_{\mQ}\|_2, & 
\|\widetilde{\mK}\|_2 &\le \sqrt{n}\, B\, \|\mW_{\mK}\|_2, & 
\|\widetilde{\mV}\|_2 &\le \sqrt{n}\, B\, \|\mW_{\mV}\|_2.
\end{align}
Let $\widetilde{\mS} := \widetilde{\mK}\widetilde{\mK}^\top + \lambda \mI_k$. Since $\widetilde{\mK}\widetilde{\mK}^\top \succeq 0$ and $\lambda > 0$,
\begin{equation}\label{eq:S-inv-bound}
\widetilde{\mS} \succeq \lambda\, \mI_k 
\quad \Longrightarrow \quad 
\|\widetilde{\mS}^{-1}\|_2 \le \tfrac{1}{\lambda}.
\end{equation}

\medskip
\noindent\textbf{Step 2: Lipschitz constants of the building blocks.}
From $\mQ = \mX \mW_{\mQ}$ (and analogously for $\mK, \mV$),
\begin{equation}
\|\partial \mQ\|_F \le \|\mW_{\mQ}\|_2 \|\Delta \mX\|_F, \quad
\|\partial \mK\|_F \le \|\mW_{\mK}\|_2 \|\Delta \mX\|_F, \quad
\|\partial \mV\|_F \le \|\mW_{\mV}\|_2 \|\Delta \mX\|_F.
\end{equation}

For the row-wise softmax basis, a standard result \citep[Prop.~4]{gao2017properties} provides  $L_{\mPhi}, L_{\mPsi} > 0$ (depends only on its temperature), composing with the linear 
pre-activation gives such that

\begin{equation}
\|\partial \mPhi\|_F \le L_{\mPhi}\, \|\mW_{\mPhi}\|_2\, \|\Delta \mX\|_F, 
\qquad
\|\partial \mPsi\|_F \le L_{\mPsi}\, \|\mW_{\mPsi}\|_2\, \|\Delta \mX\|_F.
\end{equation}
Applying the product rule to $\widetilde{\mQ} = \mPhi^\top \mQ$,
\begin{align}
\|\partial \widetilde{\mQ}\|_F 
&\le \|\partial \mPhi\|_F\, \|\mQ\|_2 + \|\mPhi\|_2\, \|\partial \mQ\|_F \notag \\
&\le \bigl(L_{\mPhi}\, B\, \|\mW_{\mPhi}\|_2 + \sqrt{n}\bigr)\, \|\mW_{\mQ}\|_2\, \|\Delta \mX\|_F 
\;=:\; \alpha_{\mPhi}\, \|\mW_{\mQ}\|_2\, \|\Delta \mX\|_F,
\end{align}
and similarly, since $\widetilde{\mK} = \mPsi^\top \mK$ and $\widetilde{\mV} = \mPsi^\top \mV$,
\begin{equation}
\|\partial \widetilde{\mK}\|_F \le \alpha_{\mPsi}\, \|\mW_{\mK}\|_2\, \|\Delta \mX\|_F,
\qquad
\|\partial \widetilde{\mV}\|_F \le \alpha_{\mPsi}\, \|\mW_{\mV}\|_2\, \|\Delta \mX\|_F,
\end{equation}
where we set
\begin{equation}
\alpha_{\mPhi} := L_{\mPhi}\, B\, \|\mW_{\mPhi}\|_2 + \sqrt{n}, 
\qquad
\alpha_{\mPsi} := L_{\mPsi}\, B\, \|\mW_{\mPsi}\|_2 + \sqrt{n}.
\end{equation}

\medskip
\noindent\textbf{Step 3: Differential of $\mathcal{A}$.}
Write $\mathcal{A} = \mPhi \cdot \mathcal{B}$ with 
$\mathcal{B} := \widetilde{\mQ} \widetilde{\mK}^\top \widetilde{\mS}^{-1} \widetilde{\mV}$. Then
\begin{equation}
\partial \mathcal{A} 
= \underbrace{\partial \mPhi \cdot \mathcal{B}}_{T_1} 
+ \mPhi \cdot \partial \mathcal{B},
\end{equation}
and the product rule yields
\begin{align}
\partial \mathcal{B} 
&= \underbrace{\partial \widetilde{\mQ}\, \widetilde{\mK}^\top \widetilde{\mS}^{-1} \widetilde{\mV}}_{T_2}
+ \underbrace{\widetilde{\mQ}\, (\partial \widetilde{\mK})^\top \widetilde{\mS}^{-1} \widetilde{\mV}}_{T_3} \notag \\
&\quad
+ \underbrace{\widetilde{\mQ}\, \widetilde{\mK}^\top\, (\partial \widetilde{\mS}^{-1})\, \widetilde{\mV}}_{T_4}
+ \underbrace{\widetilde{\mQ}\, \widetilde{\mK}^\top \widetilde{\mS}^{-1}\, \partial \widetilde{\mV}}_{T_5},
\end{align}
where, using $\partial(\mA^{-1}) = -\mA^{-1}(\partial \mA)\mA^{-1}$,
\begin{equation}
\partial \widetilde{\mS}^{-1} 
= -\widetilde{\mS}^{-1} \bigl[\partial \widetilde{\mK}\, \widetilde{\mK}^\top + \widetilde{\mK}\, (\partial \widetilde{\mK})^\top \bigr] \widetilde{\mS}^{-1}.
\end{equation}

\medskip
\noindent\textbf{Step 4: Term-by-term bounds.}
Set $\Theta := \|\mW_{\mQ}\|_2 \|\mW_{\mK}\|_2 \|\mW_{\mV}\|_2$. We bound each term in turn.

\emph{(i) Bound for $T_1$.} Using $\|T_1\|_F \le \|\partial \mPhi\|_F \|\mathcal{B}\|_2$ and 
$\|\mathcal{B}\|_2 \le \|\widetilde{\mQ}\|_2 \|\widetilde{\mK}\|_2 \|\widetilde{\mS}^{-1}\|_2 \|\widetilde{\mV}\|_2 \le n^{3/2} B^3 \Theta / \lambda$,
\begin{equation}
\|T_1\|_F \le \frac{L_{\mPhi}\, n^{3/2} B^3\, \|\mW_{\mPhi}\|_2\, \Theta}{\lambda}\, \|\Delta \mX\|_F.
\end{equation}

\emph{(ii) Bound for $\mPhi \cdot T_2$.}
\begin{align}
\|\mPhi T_2\|_F 
&\le \|\mPhi\|_2\, \|\partial \widetilde{\mQ}\|_F\, \|\widetilde{\mK}\|_2\, \|\widetilde{\mS}^{-1}\|_2\, \|\widetilde{\mV}\|_2 \notag \\
&\le \sqrt{n}\, \cdot\, \alpha_{\mPhi} \|\mW_{\mQ}\|_2\, \cdot\, \sqrt{n}\, B\, \|\mW_{\mK}\|_2\, \cdot\, \tfrac{1}{\lambda}\, \cdot\, \sqrt{n}\, B\, \|\mW_{\mV}\|_2\, \|\Delta \mX\|_F \notag \\
&= \frac{n^{3/2} B^2 \alpha_{\mPhi} \Theta}{\lambda}\, \|\Delta \mX\|_F.
\end{align}

\emph{(iii) Bound for $\mPhi \cdot T_3$.} Analogously,
\begin{equation}
\|\mPhi T_3\|_F 
\le \frac{n^{3/2} B^2 \alpha_{\mPsi} \Theta}{\lambda}\, \|\Delta \mX\|_F.
\end{equation}

\emph{(iv) Bound for $\mPhi \cdot T_5$.}
\begin{equation}
\|\mPhi T_5\|_F 
\le \frac{n^{3/2} B^2 \alpha_{\mPsi} \Theta}{\lambda}\, \|\Delta \mX\|_F.
\end{equation}

\emph{(v) Bound for $\mPhi \cdot T_4$.} Expanding $\partial \widetilde{\mS}^{-1}$ and using the triangle inequality,
\begin{align}
\|\mPhi T_4\|_F 
&\le 2\, \|\mPhi\|_2\, \|\widetilde{\mQ}\|_2\, \|\widetilde{\mK}\|_2\, \|\widetilde{\mS}^{-1}\|_2\, \|\partial \widetilde{\mK}\|_F\, \|\widetilde{\mK}\|_2\, \|\widetilde{\mS}^{-1}\|_2\, \|\widetilde{\mV}\|_2 \notag \\
&\le 2\, \sqrt{n}\, \cdot\, \sqrt{n} B \|\mW_{\mQ}\|_2\, \cdot\, \sqrt{n} B \|\mW_{\mK}\|_2\, \cdot\, \tfrac{1}{\lambda}\, \cdot\, \alpha_{\mPsi} \|\mW_{\mK}\|_2\, \cdot \notag \\
&\hspace{2.2em} \cdot\, \sqrt{n} B \|\mW_{\mK}\|_2\, \cdot\, \tfrac{1}{\lambda}\, \cdot\, \sqrt{n} B \|\mW_{\mV}\|_2\, \|\Delta \mX\|_F \notag \\
&= \frac{2\, n^3 B^4 \alpha_{\mPsi}\, \|\mW_{\mQ}\|_2 \|\mW_{\mK}\|_2^3 \|\mW_{\mV}\|_2}{\lambda^2}\, \|\Delta \mX\|_F.
\end{align}

\medskip
\noindent\textbf{Step 5: Combining the bounds}
Summing $T_1$, $\mPhi T_2$, $\mPhi T_3$, $\mPhi T_5$ (all $\mathcal{O}(1/\lambda)$) and $\mPhi T_4$ ($\mathcal{O}(1/\lambda^2)$),
\begin{equation}
\|\partial \mathcal{A}\|_F 
\le \left( \frac{C_1}{\lambda} + \frac{C_2}{\lambda^2} \right) \|\Delta \mX\|_F,
\end{equation}
with the explicit constants
\begin{align}
C_1 &= n^{3/2} B^2\, \Theta\, \Bigl( L_{\mPhi}\, B\, \|\mW_{\mPhi}\|_2 + \alpha_{\mPhi} + 2\, \alpha_{\mPsi} \Bigr) \notag \\
    &= n^{3/2} B^2\, \Theta\, \Bigl( 2 L_{\mPhi}\, B\, \|\mW_{\mPhi}\|_2 + 2 L_{\mPsi}\, B\, \|\mW_{\mPsi}\|_2 + 3 \sqrt{n} \Bigr), \\[4pt]
C_2 &= 2\, n^3 B^4\, \alpha_{\mPsi}\, \|\mW_{\mQ}\|_2 \|\mW_{\mK}\|_2^3 \|\mW_{\mV}\|_2 \notag \\
    &= 2\, n^3 B^4\, \bigl( L_{\mPsi}\, B\, \|\mW_{\mPsi}\|_2 + \sqrt{n} \bigr)\, \|\mW_{\mQ}\|_2 \|\mW_{\mK}\|_2^3 \|\mW_{\mV}\|_2,
\end{align}
where $\Theta = \|\mW_{\mQ}\|_2 \|\mW_{\mK}\|_2 \|\mW_{\mV}\|_2$. Both $C_1, C_2 > 0$ and depend polynomially on $B$, $n$, $\|\mW_{\mQ}\|_2$, $\|\mW_{\mK}\|_2$, $\|\mW_{\mV}\|_2$, $\|\mW_{\mPhi}\|_2$, $\|\mW_{\mPsi}\|_2$ (with multiplicative softmax-Lipschitz factors $L_{\mPhi}, L_{\mPsi}$). This proves \eqref{eq:lipschitz_bound}.
\end{proof}

\subsection{Transolver versus \fattn.}\label{app:trans_vs_fattn}

\begin{figure*}[ht]
    \centering
    \includegraphics[width=\textwidth]{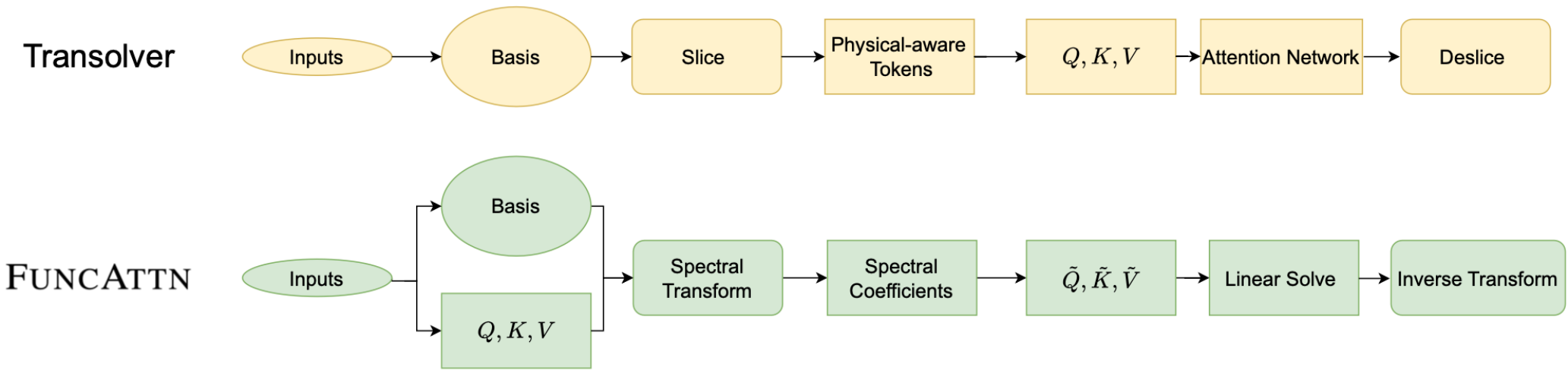}
    \caption{\textbf{Overall design of Transolver \cite{wu2024transolver} and \fattn.}}
    \label{fig:diag_full}
\end{figure*}

At first glance, Transolver and \fattn~share a similar high-level structure: both models project the input onto a set of learned basis functions, perform interactions in a reduced coefficient space, and reconstruct the output via an inverse projection (\textit{deslicing} step, in Transolver). Despite this apparent similarity, the two approaches differ fundamentally in their modeling perspective, as shown in Fig. \ref{fig:diag_full}. Transolver learns physics-aware bases that are explicitly tied to the discretized domain and are used to construct physically meaningful tokens $\mathbf{Q},\mathbf{K}, \mathbf{V}$, on which standard attention is applied. In contrast, \fattn~operates at a more abstract functional level: attention is formulated directly as a mapping between function spaces, without relying on physics-specific tokenization or domain-dependent slicing. This functional abstraction decouples the attention mechanism from the underlying discretization and enables a more general and flexible operator representation, as demonstrated in \cref{sec:experiments}.

\subsection{Connection with IntentionNet}\label{app:subsec:inten_vs_fattn}
In this section, we show that Intention \citep{garnelo2023exploringspacekeyvaluequerymodels} can be recovered as a special case of Functional Attention under a restrictive choice of basis.
\paragraph{Background: Intention}
Intention \citep{garnelo2023exploringspacekeyvaluequerymodels} was proposed as an attention mechanism capable of representing regularized least squares fitting. Given queries $\mQ \in \mathbb{R}^{n \times d}$, keys $\mK \in \mathbb{R}^{n \times d}$, and values $\mV \in \mathbb{R}^{n \times d}$, Intention computes:

\begin{equation}\label{eq:intention_core}
    \text{Intention}(\mQ, \mK, \mV) = \mQ (\mK^\top \mK + \lambda \mI_d)^{-1} \mK^\top \mV
\end{equation}

\paragraph{Functional Attention Recovers Intention}

We now show that Intention is a special case of Functional Attention when we choose any \emph{orthonormal basis} spanning the full space.

\begin{proposition}[Intention as a Special Case]\label{prop:intention_special_case}
Let $\mPhi = \mPsi \in \mathbb{R}^{n \times n}$ be any orthonormal basis, i.e., $\mPhi^\top \mPhi = \mPhi \mPhi^\top = \mI_n$. Then Functional Attention reduces to Intention:
\begin{equation}
    \fattn(\mQ, \mK, \mV) = \mQ (\mK^\top \mK + \lambda \mI_d)^{-1} \mK^\top \mV = \emph{Intention}(\mQ, \mK, \mV)
\end{equation}

\end{proposition}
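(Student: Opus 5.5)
The plan is a direct algebraic computation. I take the formula for \fattn\ as it appears in the proof of Proposition~\ref{prop:lipschitz}:
\begin{equation*}
\fattn(\mQ,\mK,\mV)=\mPhi\,\widetilde{\mQ}\,\widetilde{\mK}^\top\widetilde{\mS}^{-1}\widetilde{\mV},
\end{equation*}
with $\widetilde{\mQ}=\mPhi^\top\mQ$, $\widetilde{\mK}=\mPsi^\top\mK$, $\widetilde{\mV}=\mPsi^\top\mV$ and $\widetilde{\mS}=\widetilde{\mK}\widetilde{\mK}^\top+\lambda\mI_k$. Here $k=n$, and $\mPhi=\mPsi$ is an arbitrary orthonormal $n\times n$ matrix. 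The learned softmax bases are deliberately ignored, since the proposition treats the basis as a free choice. Substituting these definitions gives
\begin{equation*}
\fattn=\mPhi\mPhi^\top\mQ\,\mK^\top\mPsi\,\bigl(\mPsi^\top\mK\mK^\top\mPsi+\lambda\mI_n\bigr)^{-1}\mPsi^\top\mV .
\end{equation*}

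\textbf{Step 1: remove the bases.} First, $\mPhi\mPhi^\top=\mI_n$ removes the leading projection. Next, write $\lambda\mI_n=\lambda\mPsi^\top\mPsi$, so that
\begin{equation*}
\mPsi^\top\mK\mK^\top\mPsi+\lambda\mI_n=\mPsi^\top(\mK\mK^\top+\lambda\mI_n)\mPsi .
\end{equation*}
Since $\mPsi$ is orthogonal, $\mPsi^{-1}=\mPsi^\top$, and therefore
\begin{equation*}
\bigl(\mPsi^\top(\mK\mK^\top+\lambda\mI_n)\mPsi\bigr)^{-1}=\mPsi^\top(\mK\mK^\top+\lambda\mI_n)^{-1}\mPsi .
\end{equation*}
The flanking factors $\mPsi\mPsi^\top$ then collapse to the identity. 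This leaves
\begin{equation*}
\fattn=\mQ\mK^\top(\mK\mK^\top+\lambda\mI_n)^{-1}\mV,
\end{equation*}
which is the $n$-space ridge form already seen in Lemma~\ref{lemma:mc_ridge_attention}.

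\textbf{Step 2: push-through identity.} To move from the $n\times n$ inverse to the $d\times d$ inverse, I use
\begin{equation*}
\mK^\top(\mK\mK^\top+\lambda\mI_n)=(\mK^\top\mK+\lambda\mI_d)\mK^\top .
\end{equation*}
Both sides equal $\mK^\top\mK\mK^\top+\lambda\mK^\top$. Because $\lambda>0$, both regularized Gram matrices are positive definite and hence invertible. Multiplying on the left by $(\mK^\top\mK+\lambda\mI_d)^{-1}$ and on the right by $(\mK\mK^\top+\lambda\mI_n)^{-1}$ gives
\begin{equation*}
\mK^\top(\mK\mK^\top+\lambda\mI_n)^{-1}=(\mK^\top\mK+\lambda\mI_d)^{-1}\mK^\top .
\end{equation*}
Substituting this yields $\mQ(\mK^\top\mK+\lambda\mI_d)^{-1}\mK^\top\mV$, which is Intention by \eqref{eq:intention_core}.

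\textbf{Main subtlety.} The only delicate point is the regularizer in Step 1. Both $\mPsi^\top\mPsi=\mI_n$ and $\mPsi\mPsi^\top=\mI_n$ are genuinely needed there: the first to write $\lambda\mI_n$ as $\lambda\mPsi^\top\mPsi$, the second to cancel $\mPsi\mPsi^\top$ after inverting. This is why the statement requires a full square orthonormal basis with $k=n$. With a truncated basis ($k<n$), $\mPsi\mPsi^\top$ would only be a projection and the reduction would fail.
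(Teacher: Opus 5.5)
Your proposal is correct and follows essentially the same route as the paper: you substitute the orthonormal bases, use $\mPsi^\top\mK\mK^\top\mPsi+\lambda\mI_n=\mPsi^\top(\mK\mK^\top+\lambda\mI_n)\mPsi$ to cancel the bases, and then pass from the $n\times n$ inverse to the $d\times d$ inverse. The only difference is that you verify the push-through identity directly, including invertibility for $\lambda>0$, where the paper simply cites the Woodbury identity.
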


\begin{proof}
With orthonormal $\mPhi = \mPsi$ satisfying $\mPhi^\top \mPhi = \mPhi \mPhi^\top = \mI_n$, the spectral coefficients are:
\begin{equation}
    \tQ = \mPhi^\top \mQ, \quad \tK = \mPhi^\top \mK, \quad \tV = \mPhi^\top \mV
\end{equation}
Substituting into the Functional Attention formula \eqref{eq:full_func_attn}:
\begin{align}
    \fattn(\mQ, \mK, \mV) &= \mPhi \left[\tQ \tK^\top (\tK \tK^\top + \lambda \mI_n)^{-1}\right] \tV \\
    &= \mPhi \left[\mPhi^\top \mQ \mK^\top \mPhi (\mPhi^\top \mK \mK^\top \mPhi + \lambda \mI_n)^{-1}\right] \mPhi^\top \mV
\end{align}
Since $\mPhi$ is orthonormal, we have $\mPhi^\top \mK \mK^\top \mPhi + \lambda \mI_n = \mPhi^\top (\mK \mK^\top + \lambda \mI_n) \mPhi$, and its inverse is $\mPhi^\top (\mK \mK^\top + \lambda \mI_n)^{-1} \mPhi$. Thus:
\begin{align}
    \fattn(\mQ, \mK, \mV) &= \mPhi \mPhi^\top \mQ \mK^\top \mPhi \mPhi^\top (\mK \mK^\top + \lambda \mI_n)^{-1} \mPhi \mPhi^\top \mV \\
    &= \mQ \mK^\top (\mK \mK^\top + \lambda \mI_n)^{-1} \mV
\end{align}
where we used $\mPhi \mPhi^\top = \mI_n$ three times. Finally, applying the Woodbury identity:
\begin{equation}
    \mK^\top (\mK \mK^\top + \lambda \mI_n)^{-1} = (\mK^\top \mK + \lambda \mI_d)^{-1} \mK^\top
\end{equation}
we obtain:
\begin{equation}
    \fattn(\mQ, \mK, \mV) = \mQ (\mK^\top \mK + \lambda \mI_d)^{-1} \mK^\top \mV = \text{Intention}(\mQ, \mK, \mV)
\end{equation}
\end{proof}

\section{Complexity Analysis}\label{app:complexity}

\subsection{Theoretical Complexity}
The steps in computing \fattn~consist of matrix multiplications and a (small) matrix inversion when solving the linear system for functional transport. Below, we break down each step and analyze its computational complexity.

\paragraph{Basis Computation:} Our learned basis is adaptive to the input and has to be recomputed whenever the input changes. It has a complexity of $O(ndk)$ for the linear transformation and $O(nk)$ for the softmax operation.

\paragraph{Latent Projection:} $\mQ$, $\mK$, $\mV$ are projected to the corresponding latent spaces with bases $\mPhi$ and $\mPsi$, which are related by a linear transport $\mC$. The three projections $\tQ = \mPhi^T \mQ$, $\tK = \mPsi^T \mK$, $\tV = \mPsi^T \mV$ have linear complexity $O(ndk)$.

\paragraph{Linear Solve:} the functional transport $\mC^\ast$ is computed by Eq.~\eqref{eq:C_closed_form}, which only involves operation on small matrices, since $k, d \ll n$. 
Thanks to the Woodybury matrix identity~\cite{harville1997matrix}, Eq.~\eqref{eq:C_closed_form} can be reformulated as $\mC^* = \tQ\tK^\top (\tK \tK^\top + \lambda \mI_k)^{-1} = \tQ(\tK^\top \tK + \lambda \mI_d)^{-1} \tK^\top$. This leads to the fact that we only have to invert the smaller matrix, either $d\times d$ or  $k\times k$, and obtain numerically identical results. This has a direct position impact computationally. 

The computation of $\tK\tK^T + \lambda \mI_k$ has complexity $O(dk^2)$, followed by the inversion with complexity $O(k^3)$ if no additional structure can be exploited. Additionally, the computation of $\tQ\tK^T$ has complexity $O(dk^2)$ and the final matrix multiplication has complexity $O(k^3)$. This results in a complexity of $O(dk^2+k^3)$, which is independent of the possibly large context length $n$. 

Alternatively, one can compute $\tK^\top\tK + \lambda \mI_d$ with complexity $O(d^2k)$, followed by the inversion with complexity $O(d^3)$. This results in a complexity of $O(d^2k+d^3)$, which is preferable when $d < k$.

\paragraph{Transport and Back-Projection:} The optimal transport $\mC^\ast$ is used to transport $\tV$ to the query space by matrix multiplication and has a complexity $O(dk^2)$, after which it is multiplied with the learned basis for the query space $\mPhi$ and has a complexity $O(ndk)$.

In summary, the computation complexity of \fattn~is $O(ndk + dk\min(k,d) + \min(k,d)^3)$, which is linear in $n$ and $d$, and cubic in $k$, which is typically small in practice and is set to $64$ and proven to be effective in our case. In contrast to the classic scaled dot-product attention, which has a cubic complexity in $n$, \fattn~is much more efficient, both in terms of runtime and number of tokens.

\begin{figure}[ht]
    \centering
    \includegraphics[width=\linewidth]{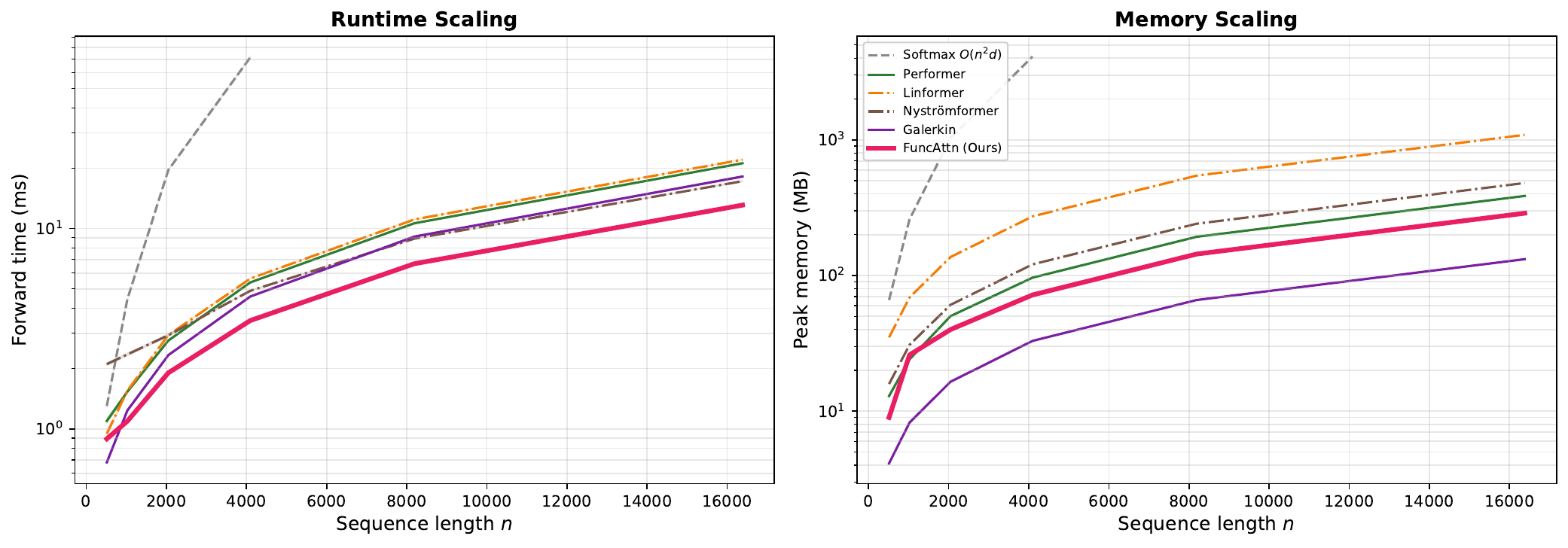}
    \caption{\textbf{Runtime and memory scaling.} Forward-pass time (left) and peak GPU memory (right) plots of sequence length 
    $n$, with $d=128$, $k=64$. Softmax attention grows quadratically, whereas \fattn~exhibits the predicted linear scaling and outperforms 
    other linear-attention baselines at large $n$.}
    \label{fig:scaling}
\end{figure}

\subsection{Empirical Runtime and Memory Scaling}
To complement the theoretical analysis, we benchmark the forward-pass runtime and peak GPU memory of \fattn~against representative baselines: the standard softmax attention~\cite{vaswani2017attention}, 
Performer~\cite{choromanski2022rethinkingattentionperformers}, 
Linformer~\cite{wang2020linformerselfattentionlinearcomplexity}, Nystr\"omformer~\cite{xiong2021nystromformernystrombasedalgorithmapproximating}, 
and Galerkin attention~\cite{cao2021choose}. 
We sweep the sequence length $n \in \{2^7, 2^8, \dots, 2^{14}\}$ with fixed feature dimension $d=128$, basis count $k=64$, batch size $1$, and a single forward pass on an NVIDIA A40 GPU; measurements include the adaptive basis computation. 
As shown in Fig.~\ref{fig:scaling}, softmax attention exhibits the expected quadratic growth in both runtime and memory, becoming prohibitively expensive at long contexts. In contrast, \fattn~scales linearly in $n$, matching our theoretical analysis. While the other linear-attention variants share the same asymptotic $O(n)$ trend, \fattn~consistently achieves the smallest wall-clock time and memory footprint at large $n$, owing to its compact $k \times k$ operator and the absence of per-token softmax normalization. The gap widens with $n$, making \fattn~particularly suited to high-resolution operator learning.

\section{Experiment Details} \label{app:benchmark}
\subsection{Regression Task}\label{app:benchmark_sinusoidal}

\paragraph{Task.} 
We consider a meta-learning setting from \cite{finn2017modelagnosticmetalearningfastadaptation} where each task corresponds to a sinusoidal function $f(x) = \alpha \sin(x - \gamma)$ defined on $x \in [-6, 6]$. The amplitude $\alpha$ and phase $\gamma$ are sampled uniformly from $[0.1, 5]$ and $[0, \pi]$, respectively. For each task, we observe a support set of $K$ randomly sampled input-output pairs. The goal is to learn a predictor that generalizes to arbitrary query locations given only the support set. Performance is measured by mean squared error on unseen query points, averaged over tasks.

\paragraph{Model Framework.}
All methods share the same encoder-decoder architecture and differ only in the attention mechanism $\mathcal{A}$:
\begin{equation}\label{eq:model}
\hat{y} = f_{\text{dec}}\bigl(\mathcal{A}(f_{\text{enc}}(\text{support}), \text{query})\bigr).
\end{equation}
Table~\ref{tab:Sinusoidal_model_comparison} details each configuration.

\paragraph{Training Details.}
For Attention and Intention, we adopt the hyperparameters from \cite{garnelo2023exploringspacekeyvaluequerymodels}. To ensure fair comparison, we maintain a similar number of parameters across \fattn~and Attention. All models are trained for 50,000 iterations with a batch size of 8 using the Adam optimizer \cite{kingma2014adam}. The learning rate is tuned individually for each model.

\input{tables/sinusoidal/model_config}

\subsection{PDE Benchmarks} \label{app:benchmark_pde}
We benchmark our methods on eight popular PDEs benchmarks across diverse geometries and physical scenarios:
\input{tables/pde/benchmark_summary}
\paragraph{Elasticity \cite{li2023fourier}.} 
This benchmark considers static deformation of a two-dimensional linear elastic body with varying domain geometry, governed by the linear elasticity equations. The input consists of nodal coordinates representing the irregular domain geometry with 972 points per sample, and the output is the corresponding displacement field $\mathbf{g} \in \mathbb{R}^2$ at each node. We use 1000 training and 200 test samples.

\paragraph{Plasticity \cite{li2023fourier}.} 
This benchmark simulates dynamic metal forming where an elasto-plastic block obeying $J_2$ plasticity is compressed by a descending rigid punch. The punch profile is generated by interpolating random control points with cubic Hermite splines. Ground truth is computed via finite element analysis on a $101 \times 31$ grid over 20 time steps. The task is to predict the displacement evolution given the punch geometry. We use 900 training and 80 test samples.

\paragraph{Airfoil \cite{li2023fourier}.} 
This benchmark studies compressible inviscid flow over a deformable airfoil, governed by the Euler equations. The spatial discretization employs a C-grid mesh with approximately $200 \times 50$ quadrilateral elements. The task is to predict the Mach number field given the mesh point locations as input. We use 1000 training and 200 test samples.

\paragraph{Pipe \cite{li2023fourier}.} 
This benchmark studies incompressible viscous flow in a deformable pipe, governed by the incompressible Navier-Stokes equations with viscosity $\nu = 0.005$. A parabolic velocity profile is imposed at the inlet, with free boundary at the outlet and no-slip condition at the pipe surface. The spatial discretization uses a $129 \times 129$ mesh. The task is to predict the horizontal velocity field given the mesh point locations as input. We use 1000 training and 200 test samples.

\paragraph{Darcy \cite{li2020fourier}.} 
This benchmark models steady-state pressure distribution in heterogeneous porous media, governed by a second-order elliptic PDE on a unit square domain with homogeneous Dirichlet boundary conditions. The task is to learn the nonlinear mapping from the spatially varying permeability field to the pressure head. Solutions are computed on a $421 \times 421$ mesh and subsampled to $85 \times 85$ for training. We use 1000 training and 200 test samples.

\paragraph{Darcy Flow with Notch in Triangular Domain \cite{tripura2022wavelet}}
This benchmark extends the standard 2D Darcy problem to a more challenging geometric setting, where the flow medium is defined on a triangular domain containing an interior notch. The flow is governed by the Darcy equation with a fixed permeability field $a(x,y) = 0.1$ and forcing function $f(x,y) = -1$. The boundary conditions on the triangular domain are generated using a Gaussian process $u(x) \sim \mathcal{GP}(0, \mathcal{K}(x,x'))$ with kernel $\mathcal{K}(x,x') = \exp\!\left(-(x-x')^2/2l^2\right)$, where $l = 0.2$ and $x, x' \in [0,1]$. The task is to learn the operator that maps the boundary conditions to the pressure field over the entire domain. Solutions are computed on a $101 \times 101$ mesh and subsampled to $51 \times 51$ for training. We use 1900 training and 100 test samples.

\paragraph{Navier-Stokes \cite{li2020fourier}.} 
This benchmark studies incompressible viscous fluid dynamics through the vorticity transport formulation on a periodic unit square domain. We consider the turbulent regime with viscosity $\nu = 10^{-5}$. The spatial discretization uses a $64 \times 64$ grid. Each trajectory consists of 20 temporal snapshots; the task is to predict the latter 10 frames given the initial 10 frames. We use 1000 training and 200 test samples.

\paragraph{Burgers \cite{li2020fourier}.} 
This benchmark models one-dimensional viscous fluid dynamics governed by the nonlinear Burgers' equation on a periodic domain with viscosity $\nu = 0.1$. The task is to predict the solution at terminal time $t=1$ given the initial condition, which is sampled from a Gaussian random field. Solutions are computed on a mesh of $2^{13}$ points and subsampled to lower resolutions. We use 1024 training and 100 test samples.

\paragraph{OOD Generalization AirfRANS.}\label{app:ood_generalization} The AirfRANS dataset~\citep{bonnet2023airfranshighfidelitycomputational} contains high-fidelity simulation data for Reynolds-Averaged Navier-Stokes (RANS) equations, designed to assist airfoil design. The dataset features airfoils from the NACA 4- and 5-digit series, with each case discretized into approximately 32,000 mesh points. The simulation records air velocity, pressure, and viscosity in the surrounding space, as well as surface pressure. In our experiments, we evaluate on the out-of-distribution (OOD) test splits, specifically the \textit{Scarce} regime for both angle of attack (AoA) and Reynolds number variations. These OOD splits are constructed by holding out samples with extreme parameter values during training, providing a challenging benchmark for assessing the generalization capability of neural surrogate models. Following prior work \cite{wu2024transolver}, we focus on predicting the surface pressure field, which is essential for estimating lift coefficients relevant to aircraft take-off and landing performance.

\paragraph{Evaluation metrics.} 
We evaluate all methods using the relative $L_2$ error on the test set. Let $g$ denote the ground-truth solution obtained from numerical simulations and $\hat{g} = \mathcal{O}_\theta(f)$ the model prediction. The test error is computed as:
\begin{equation}
\text{Rel. } L_2 = \frac{1}{N_{\text{test}}} \sum_{i=1}^{N_{\text{test}}} \frac{\|g_i - \hat{g}_i\|_{L_2(\Omega)}}{\|g_i\|_{L_2(\Omega)}},
\end{equation}
where $\|\cdot\|_{L_2(\Omega)}$ denotes the $L_2$ norm over the spatial or spatial-temporal domain. For training, we minimize the same relative $L_2$ loss on the training set.

Additionally, for AirfRANS, we evaluate the relative $L_2$ error of drag and lift coefficients derived from the predicted physics fields. For unit density fluid, the drag coefficient $C_D$ and lift coefficient $C_L$ are defined as~\citep{wu2024transolver}:
\begin{equation}
C_D, C_L = \frac{2}{v^2 A} \left( \int_{\partial\Omega} p(\boldsymbol{\xi}) \left( \hat{\mathbf{n}}(\boldsymbol{\xi}) \cdot \hat{\mathbf{d}} \right) \mathrm{d}\boldsymbol{\xi} + \int_{\partial\Omega} \tau(\boldsymbol{\xi}) \cdot \hat{\mathbf{d}} \, \mathrm{d}\boldsymbol{\xi} \right),
\end{equation}
where $\hat{\mathbf{d}}$ is the drag or lift direction respectively, $v$ is the inlet flow speed, $A$ is the reference area, $\partial\Omega$ is the object surface, $p$ is the pressure, $\hat{\mathbf{n}}$ is the outward unit normal, and $\tau$ is the wall shear stress. We also report Spearman's rank correlation $\rho$~\cite{spearman1961proof} between predicted and ground truth coefficients across test samples, which measures how well the model preserves the ranking of designs—a key property for engineering optimization.

\paragraph{Training details.}
We use a consistent architecture across all benchmarks with 8 transformer layers and 8 attention heads to match previous work. The hidden channel dimension is set to 128 for most benchmarks, while we increase it to 256 for Navier-Stokes and AirfRANS due to their higher complexity. The number of bases is set to 64 for standard benchmarks and reduced to 32 for Navier-Stokes and AirfRANS to balance computational cost and expressiveness. We further found that sharing the learnable basis modules across layers encourages the model to learn more structured bases, which improves accuracy.

\input{tables/appendix/training_details_pde}

\section{Ablation}\label{app:ablation}
\subsection{Number of Basis}
Table~\ref{tab:ablation_modes} presents the complete ablation study on the number of bases. As discussed in Section \ref{main:exp:ablation}, moderate mode counts (64–128) achieve the best balance between expressiveness and generalization. Notably, the optimal number of modes varies across tasks: Elasticity and Plasticity favor 256 bases, while Darcy benefits from higher counts. The optimal mode count varies by task, likely reflecting differences in solution smoothness across PDE systems.

\input{tables/pde/abalation_modes}

\subsection{Transpose vs. Pseudo-Inverse Projection}\label{app:basis_vs_proj}

As motivated in Remark~\ref{remark:transpose_pseudo}, we use the transpose $\mPhi^\top$ in place of the Moore--Penrose pseudo-inverse $\mPhi^\dagger = (\mPhi^\top \mPhi)^{-1} \mPhi^\top$. The unregularized pseudo-inverse causes exploding gradients in our experiments. A Tikhonov-stabilized variant $\mPhi^\dagger_\lambda = (\mPhi^\top \mPhi + \lambda \mI_k)^{-1} \mPhi^\top$ \citep{hoerl1970ridge} resolves this, but introduces an additional regularizer and increases the condition number of the 
inverted matrix in Eq.~\eqref{eq:full_func_attn} by more than an order of magnitude, as shown in \Cref{fig:transpose_vs_pinv}. In contrast, the transpose yields stable training, lower computational cost, and better accuracy, reported in \Cref{tab:transpose_vs_pinv}.

\input{tables/appendix/transpose_pseudo}

\subsection{Sensitivity to Tikhonov Parameter $\lambda$}\label{app:lambda_sensitivity}

Remark~\ref{remark:tikhonov} and Proposition~\ref{prop:lipschitz} both suggest that the Tikhonov term $\lambda\|\mC\|_F^2$ in Eq.~\eqref{eq:lsq_obj} primarily serves to stabilize the linear solve. Here we verify this empirically. In our implementation, $\lambda = \mathrm{sigmoid}(\alpha)$ is learnable through a scalar $\alpha$, and we vary its initialization $\alpha_{\text{init}}$ to study how the strength of regularization affects training. \Cref{fig:condition_number} tracks the average condition number $\kappa(\tilde{\mK}\tilde{\mK}^\top + \lambda \mI_k)$ across the 8 \fattn\ layers throughout training on Elasticity.

\input{tables/appendix/lambda_sensitivity}

Smaller $\alpha_{\text{init}}$ corresponds to weaker regularization and raises the final condition number from $\sim$8 to $\sim$100, yet the test error on Elasticity varies by less than $0.02$ across the three settings, as shown in \Cref{tab:lambda_sensitivity}. Within this range, \fattn\ is thus robust to the choice of $\lambda$, with relaxed regularization even yielding a mild accuracy gain. Combined with the instability observed when $\lambda \to 0$ in 
Remark~\ref{remark:tikhonov} and the $1/\lambda^2$ scaling in Proposition~\ref{prop:lipschitz}, this suggests that $\lambda$ acts as a numerical safeguard: it must remain strictly positive, but its exact value is not a sensitive hyperparameter.

\input{sections/appendix/Visualization}

\end{document}

%% file: symbol.tex
\DeclareMathOperator*{\argmin}{arg\,min}

\newcommand{\mX}{\mathbf{X}}
\newcommand{\mC}{\mathbf{C}}
\newcommand{\mA}{\mathbf{A}}
\newcommand{\mB}{\mathbf{B}}
\newcommand{\mI}{\mathbf{I}}
\newcommand{\mQ}{\mathbf{Q}}
\newcommand{\mK}{\mathbf{K}}
\newcommand{\mW}{\mathbf{W}}

\newcommand{\mV}{\mathbf{V}}
\newcommand{\mS}{\mathbf{S}}
\newcommand{\mPhi}{\mathbf{\Phi}}
\newcommand{\mPsi}{\mathbf{\Psi}}

\newcommand{\tQ}{\tilde{\mathbf{Q}}}
\newcommand{\tK}{\tilde{\mathbf{K}}}
\newcommand{\tV}{\tilde{\mathbf{V}}}

\newcommand{\cF}{\mathcal{F}}
\newcommand{\cG}{\mathcal{G}}
\newcommand{\cO}{\mathcal{O}}

\newcommand{\cT}{\mathcal{T}}

\newcommand{\vphi}{\bm{\phi}}
\newcommand{\vpsi}{\bm{\psi}}

\newcommand{\fattn}{\textsc{FuncAttn}}

%% file: sections/1_Intro.tex
\section{Introduction}

Many machine learning problems can be formulated as learning a mapping between infinite-dimensional function spaces, a task also known as operator learning.
The operator learning paradigm has a significant impact on numerous applications, such as solving partial differential equations (PDEs), computational designs by inverting PDEs, or physical simulations~\cite{kovachki2023neural}.
While the mainstream machine learning community has moved toward deep learning for most tasks, building a neural network architecture suitable for operator learning is challenging, largely due to the difficulties in representing continuous functions.

Most of the first neural operator architectures try to take advantage of functional bases to represent and process functions.
For example, the Fourier Neural Operator (FNO) \cite{li2020fourier}, the Multiwavelet-based Model \cite{NEURIPS2021_c9e5c2b5} and U-NO \cite{rahman2023unoushapedneuraloperators} 
demonstrated that learning in the spectral domain allows building efficient mapping between physical fields, enabling fast PDE solving.
Other than the Fourier basis, there are other network architectures exploring Laplacian eigenbasis~\cite{sharp2022diffusionnetdiscretizationagnosticlearning}.
While these heuristically decided bases can be effective in tasks where the basis choice is suitable, these architectures can potentially be limited in their representational powers, largely due to the inductive bias in their design.
It is unclear how such hand-picked bases can scale to wider applications.

Moreover, the community has witnessed the power of the transformer-based architectures, which have achieved SOTA in many tasks in NLP, vision and operator learning~\cite{vaswani2017attention, dosovitskiy2020image, wu2024transolver}.
Despite its success, these attention-based networks represent a function using a discrete set of tokens.
There is little discussion in connecting such a token-centric perspective with the operator learning setup.
Moreover, these methods, representing a function using a collection of its samples, (1) may scale poorly, since the scaled dot-product attention mechanism requires quadratic computation with respect to the number of samples needed to represent the function, (2) ignore the global
functional structure, leading to redundant parameterization and (3) miss a principled way to maintain consistency
across different resolutions or irregular meshes.

In this work, we propose an alternative perspective of the attention mechanism for operator learning. 
Rather than viewing attention as a mechanism for computing pointwise correspondences between tokens, we reinterpret it as a \emph{functional correspondence} between learned function spaces. 
Our approach draws inspiration from the seminal \textit{functional maps} framework in geometry processing~\cite{ovsjanikov2012fmaps}, where correspondences between complex 3D non-rigid shapes are represented as simple linear operators acting on functional bases. 
This alternative perspective allows us to design an attention formalism, which can capture intrinsic structural properties of the underlying problem.
Our framework, which we call \emph{Functional Attention (\fattn)}, provides a unified and theoretically grounded approach to operator learning. 

In summary, our contribution is two-fold:
First, we introduce a novel attention paradigm based on functional correspondences, establishing a principled connection between the standard attention formulation and the functional maps framework.
Second, we demonstrate that this perspective unlocks a new design space for attention mechanisms, and we present an effective instantiation - \fattn, that is versatile across a diverse set of tasks, including PDE solving, 3D segmentation, and regression. 
Across these settings, \fattn~ consistently achieves state-of-the-art performance and exhibits strong robustness, generalizing reliably across datasets and resolutions.

%% file: sections/2_related_work.tex
\section{Related Work}

Recent research on operator learning has explored a wide range of architectures to balance expressiveness, computational efficiency, and geometric flexibility, based on attention networks and their derivatives. In our literature review, we focus on neural operator methods and attention mechanisms that model mappings between infinite-dimensional function spaces. These works form the conceptual basis for reinterpreting attention as a functional operator, which directly motivates our spectral formulation.

\subsection{Attention and its Derivatives}

Standard scaled dot-product attention~\cite{vaswani2017attention} is formulated as:

\begin{equation}\label{eq:standard_attn}
\mathrm{Attention}(\mQ, \mK, \mV) = \mathrm{Softmax}\left(\frac{\mQ\mK^\top}{\sqrt{d_k}}\right) \mV
\end{equation}

where $\mQ, \mK, \mV$ represent the query, key, and value matrices, respectively. While powerful, this mechanism suffers from quadratic complexity with respect to the context length, posing a significant challenge for scaling.

Linear attention~\cite{katharopoulos2020transformersrnnsfastautoregressive} addresses this by introducing kernel functions $\phi(\cdot)$ that enable matrix associativity, allowing the computation to be reordered as $\phi(\mQ)(\phi(\mK)^\top \mV)$. This reduces complexity from quadratic to linear. Various kernel designs have been proposed, including random Fourier features~\cite{peng2021randomfeatureattention}, positive random features in Performer~\cite{choromanski2022rethinkingattentionperformers}, cosine reweighting~\cite{qin2022cosformerrethinkingsoftmaxattention}, MLP-based maps~\cite{zhang2024hedgehogporcupineexpressive}, and discrete cosine transforms~\cite{2024dijiangefficientlargelanguage}. However, these methods operate in the token space and overlook intrinsic geometric and physical structures. Our work differs from this by introducing a functional perspective that incorporates these structures, which we demonstrate to be more effective empirically.

\textbf{Low-rank approximation methods} seek to compress attention representations to reduce cost. Linformer~\cite{wang2020linformerselfattentionlinearcomplexity} projects keys and values into a low-dimensional space, while Nyströmformer~\cite{xiong2021nystromformernystrombasedalgorithmapproximating} utilizes the Nyström method for softmax approximation. Other variants include Perceiver's~\cite{jaegle2021perceivergeneralperceptioniterative} fixed learnable queries, Reformer's~\cite{kitaev2020reformerefficienttransformer} locality-sensitive hashing, and Monarch Attention's~\cite{yaras2025monarchattentionzeroshotconversionfast} structured matrix constraints. Unlike these approaches, which focus on approximating the standard attention matrix, our method introduces a novel formulation that performs least-squares regression in a learned spectral space, offering a preferable alternative for capturing complex dependencies.

\subsection{Neural Operator} 

Neural operators learn mappings between infinite-dim function spaces to provide mesh-free approximations of PDE solution operators. The Fourier Neural Operator (FNO)~\cite{li2020fourier} parameterizes integral kernels in the Fourier domain via the FFT, achieving efficient $O(n\log n)$ complexity. However, FNO is restricted to uniform Cartesian grids and suffers from periodic boundary assumptions. While Geo-FNO~\cite{li2023fourier} uses diffeomorphisms to handle irregular domains, it remains dependent on global charts that are difficult to construct for complex topologies. In contrast, our approach draws inspiration from spectral transformations, but designs attention directly in the spectral space. Unlike FNO-based methods, our framework is not limited to grid-based PDEs and generalizes to broader learning tasks such as regression and segmentation.

To accommodate arbitrary geometries, graph-based methods like GNO~\cite{li2020neuraloperatorgraphkernel}, GINO~\cite{li2023geometry}, and UPT~\cite{alkin2025universalphysicstransformersframework} utilize message passing or latent super-nodes. Alternatively, Transformer-based architectures such as OFormer~\cite{li2022transformer}, GNOT~\cite{hao2023gnot}, and FactFormer~\cite{li2023scalabletransformerpdesurrogate} leverage attention to handle geometry flexibly. The Galerkin Transformer~\cite{cao2021choose}  interprets linear attention as a Petrov-Galerkin projection, treating columns of $\mQ, \mK, \mV$ as samples of functions in Hilbert spaces. While we also adopt this functional view, we distinguish our work by explicitly learning a set of bases in the query and key-value spaces via a simple feed-forward architecture. This separation of function and basis, inspired by the functional maps framework~\cite{ovsjanikov2012fmaps,fumero2024latent,behmanesh2024cross}, offers greater expressiveness than the implicit basis change in Galerkin attention. Furthermore, we compute our attention via an \textit{optimal linear solve} in the spectral domain rather than as an approximation of classical attention.

Recent works like Transolver~\cite{wu2024transolver} and Transolver++~\cite{luo2025transolveraccurateneuralsolver} reduce costs by learning intrinsic physical states through a "slice-and-attend" paradigm. Our method generalizes this concept; while their slicing and de-slicing layers are conceptually similar to our spectral transforms, we leverage a more general spectral framework. Our learned functional coefficients generalize their physics-aware tokens to capture intrinsic structures beyond pure physics. 
Additionally, while Transolver applies standard scaled dot-product attention, our \textit{optimal linear solve} in the spectral space experimentally demonstrates highly competitive performance.

%% file: sections/3_Method.tex
\section{The Problem \& Motivation}

In this section, we first revisit the task of operator learning, which defines our learning objective at the level of mappings between function spaces (\Cref{subsec:operator_learning}).Then we discuss the common practice to date, which employs tokenized representations via attention mechanism (\Cref{subsec:token}). This approach ignores the geometric structure of the underlying problem and is data-inefficient (\Cref{subsec:motivation}). 
This limitation motivates a functional view of the problem inspired by the seminal work of functional maps \cite{ovsjanikov2012fmaps}, briefly introduced as background (\Cref{subsec:functional}).

\subsection{Operator Learning Formulation}\label{subsec:operator_learning}

We consider the task of learning mappings between an input and output space. Let $\Omega \subset \mathbb{R}^d$ be a bounded space. Consider $\cF=\cF(\Omega; \mathbb{R}^{d_f})$ and $\cG=\cG(\Omega; \mathbb{R}^{d_g})$ be separable Banach spaces of function taking values in $\mathbb{R}^{d_f}$ and $\mathbb{R}^{d_g}$ respectively. We aim to learn the underlying mapping between such functions, that can be formalized as a nonlinear operator: $\cO : \cF \to \cG$.

Suppose we are given a dataset of observed pairs $\{(f_j,g_j)\}_{j=1}^N$ where $f_j \sim \mu$ are i.i.d.\ samples from a probability measure $\mu$ supported on $\cF$, and $\cO^\ast(f_j)=g_j$ denotes the ground truth mapping.
We aim to use a neural network to approximate $\cO^\ast$ by $\cO: \cF \times \theta \to \cG$ with learnable parameter $\theta$. This provides us a framework for learning infinite dimensional function through an optimization problem with a cost functional $\mathcal{L}: \cG \times \cG\to \mathbb{R}$:
\begin{equation}
    \vspace*{-5pt}
    \min_{\theta \in \Theta}\ \mathbb{E}_{f \sim \mu}\left[\, \mathcal{L}\left(\cO(f; \theta),\, \cO^{\ast}(f)\right) \right]
    \vspace*{-5pt}
\end{equation}

Many scientific and geometric learning tasks are naturally posed as mappings between \emph{functions} rather than finite-dimensional vectors: the input is a continuous quantity, such as a coefficient field, a forcing or an observation field, and the output is another continuous quantity, such as a solution field, a future state or a reconstructed field. 
Formulating the problem as an operator learning makes the learning target independent of a particular discretization, enabling resolution-invariant generalization across meshes or sampling densities (cf. Tab.~\ref{tab:superres}). 

\subsection{Tokenized Representations}\label{subsec:token}

While operator learning formalism provides a framework that is free of discretization, practical neural architectures, including attention-based models, operate on finite samples of functions. As a result, these models do not learn operators directly, but rather implement pointwise mappings on discretized evaluations of the input field.
For instance, an input function $f\in \cF$ is evaluated at locations $\{x_i\}_{i=1}^n$ to obtain $\{f(x_i)\}_{i=1}^n$, which are stacked together to obtain the input token matrix $\mX \in \mathbb{R}^{n \times d}$, where $n$ is dubbed as the context length. 
The token matrix $\mX$ is further used to compute query, key and value by $\mQ = \mX \mW_{\mQ}$, $\mK = \mX \mW_{\mK}$, $\mV = \mX \mW_{\mV}$, 
where $\mW_{\mQ} \in \mathbb{R}^{d \times d_q}$, $\mW_{\mK} \in \mathbb{R}^{d \times d_k}$, $\mW_{\mV} \in \mathbb{R}^{d \times d_v}$ are learnable weights and $d_q=d_k$.
Attention-based architectures have been increasingly employed to 
leverage their ability to capture long-range dependencies in physical fields or latent space~\cite{cao2021choose, li2022transformer, hao2023gnot,  wu2024transolver}. However, standard attention treats each row of $\mX$ as an independent token—a design inherited from NLP.  
As \citet{cao2021choose} noted, the columns of $\mQ/\mK/\mV$ matrices can be seen as discretizations of functions in Hilbert spaces.   However, they merely showed that theoretically the inner products computed during the Galerkin-type attention step act as the coefficients for a linear combination of learned bases in the value space, but did not specify how these bases and coefficients can be computed in practice.
Nevertheless, this intriguing perspective motivates the questions: can this functional view be leveraged to design a data-efficient attention mechanism by considering the underlying structure of problem? 

\begin{figure*}[t!]
    \vspace{-15pt}
    \centering
    \includegraphics[width=0.95\textwidth]{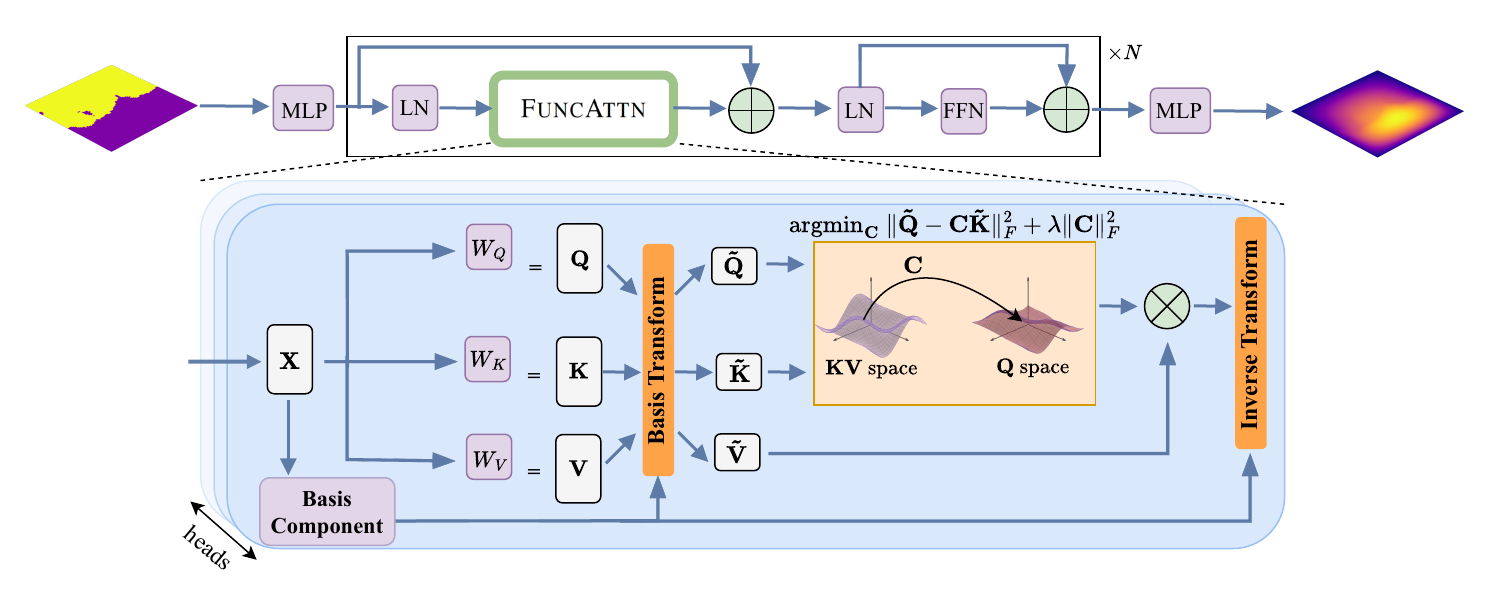}
    \vspace{-15pt}
    \caption{\textbf{Architecture Overview.} 
    \textit{Top:} Input functions are encoded by $\text{MLP}$, processed through $N$ 
    \fattn~blocks, and decoded by $
    \text{MLP}$. 
    \textit{Bottom:} In each \fattn~Module, $\mQ, \mK, \mV$ are transformed to the spectral domain where cross-space attention computes optimal linear mapping $\mC$, then inverse-transformed. Purple blocks denote learnable layers. MLP, LN, and FFN stand for Multi-Layer Perceptron, Layer Norm, and Feed-Forward Network, respectively. The Basis Component learns an adaptive basis, and the Basis Transform and Inverse Transform modules apply the learned basis for computing functional attention.} \label{fig:model_architecture}
    \vspace{-12pt}
\end{figure*}

\subsection{Motivation}\label{subsec:motivation}
Treating each token independently and ignoring their underlying relationship is suboptimal, especially when the problems manifest geometric or physical structures.
In standard attention, the dense score matrix that maps values to outputs is represented explicitly as pointwise affinities between discrete samples. 
While this representation is convenient, it tightly couples the complexity of attention to the number of tokens, and implicitly assumes that meaningful correspondences must be established at the level of individual points.
However, in many settings of interest, such as operator learning \cite{li2020fourier}, physical field modeling \cite{wu2024transolver} or dense prediction \cite{devlin2019bert}, tokens arise as samples of underlying functions. 
In these cases, the intrinsic complexity of the signal is often \emph{far lower} than its discretization resolution, and many distinct point-wise affinity matrices induce nearly identical transformations at the function level. This makes the dense, token-level parameterization both computationally inefficient and conceptually redundant. 
These observations suggest that the key object of interest in the attention computation is not the pointwise affinity matrix itself, but the linear operator it induces on function spaces. 
Since attention ultimately defines a map from values to outputs, it is natural to ask whether this operator can be represented directly between function spaces, without resorting to explicit point-to-point correspondences.
This leads to a functional perspective on attention: \textit{how should one define attention as a linear operator that transforms a function from the key–value space to the query space, given only discrete samples?} Addressing this question is the central goal of our work. 

\subsection{Functional Maps}\label{subsec:functional}
To this end, we draw inspiration from the functional maps framework~\cite{ovsjanikov2012fmaps}, which provides a principled representation of correspondences between spaces through linear operators acting on function spaces. It is originally proposed in shape matching realm: rather than seeking combinatorially hard point-to-point correspondences between manifolds $\mathcal{M}$ and $\mathcal{N}$, functional maps shift the problem to function spaces $L^2(\mathcal{M})$ and $L^2(\mathcal{N})$. This perspective offers two key advantages: the transformation between function spaces is linear even when the underlying point map is combinatorial, and the correspondence can be compactly represented in $k$ spectral bases, reducing complexity from $O(n^2)$ to $O(k^2)$ where $k \ll n$.
Specifically, given $m$ pairs of descriptor function represented in their respective truncated Laplace-Beltrami eigenbases as matrices $\mathbf{A} \in \mathbb{R}^{k_{\mathcal{M}} \times m}$ and $\mathbf{B} \in \mathbb{R}^{k_{\mathcal{N}} \times m}$, the functional map $\mathbf{C} \in \mathbb{R}^{k_{\mathcal{N}} \times k_{\mathcal{M}}}$ between two functional spaces is estimated via regularized least-squares:
\begin{equation}\label{eqfm_objective}
\mathbf{C} = \argmin_{\mathbf{C}}\ \|\mathbf{B}-\mathbf{C}\mathbf{A}\|_F^2 + \lambda\,\mathcal{R}(\mathbf{C})
\end{equation}
This formulation reduces a combinatorial problem to a convex optimization that can often be solved in closed form.

In the following, we adopt this functional viewpoint and show how a novel attention mechanism can be formulated as the estimation of a compact linear operator between learned functional spaces, avoiding explicit pointwise matching. This leads to a Functional Attention (\fattn) that replaces softmax-based pointwise affinities with a basis-aware operator learned through least-squares objectives, closely mirroring the functional maps paradigm while remaining fully compatible with modern attention architectures.

\section{Functional Attention}
Drawing on the theory of functional maps, we introduce a novel attention mechanism that achieves compact, basis-aware transport. We first introduce the overall idea of \fattn. Then we dive into its two key components, namely the estimation of optimal linear transport (\Cref{subsec:optimization}) and the basis selection (\Cref{subsec:basis}). See Fig.~\ref{fig:model_architecture} for the architecture overview. We finally prove the continuity of \fattn~(\Cref{subsec:lipschitz}). For an analysis of the computational complexity, we refer the reader to Appendix \ref{app:complexity}. 

\paragraph{Main Idea.}

We design \fattn~by asking: \emph{what linear operator $\cT: \cF(\mathcal{X}) \to \cF(\mathcal{Y})$ best explains the transport from the key-value space to the query space?} If we equip both function spaces with bases $\{\vpsi_j\}_{j=1}^k$ and $\{\vphi_i\}_{i=1}^k$, then $\cT$ admits a matrix representation $\mC \in \mathbb{R}^{k \times k}$. The correspondence problem reduces from estimating an $n \times n$ affinity matrix to estimating a compact $k \times k$ operator.

Concretely, let $\mPhi \in \mathbb{R}^{n \times k}$ and $\mPsi \in \mathbb{R}^{n \times k}$ denote bases at the query and key-value space respectively, where each column represents a basis function evaluated at the $n$ discretization points. The spectral coefficients of queries and keys are:
\begin{equation}\label{eq:projection_pinv}
    \tQ = \mPhi^\dagger \mQ \in \mathbb{R}^{k \times d}, \quad \tK = \mPsi^\dagger \mK \in \mathbb{R}^{k \times d}
\end{equation}
where $\mPhi^\dagger = (\mPhi^\top \mPhi)^{-1} \mPhi^\top$ denotes the Moore--Penrose pseudo-inverse, similarly for $\mPsi^\dagger$.
The functional attention operator $\mC$  is the underlying transport defined by $\mK$ and $\mQ$ in the spectral space,  which can be deployed to transport $\mV$.
\begin{equation}\label{eq:funcattentionintuition}
    \fattn(\mQ, \mK, \mV) = \mPhi \, \mC \, \tV
\end{equation}
where $\tV = \mPsi^\dagger \mV$ are the spectral coefficients of values.
This yields a compact transport mechanism that is continuous, basis-aware, and as shown later, naturally stable under regularization. 

\begin{remark}\label{remark:transpose_pseudo}
While Eq.~\eqref{eq:projection_pinv} prescribes the Moore--Penrose pseudo-inverse as the canonical projection onto $\mathrm{span}(\mPhi)$, in practice we use the transpose $\mPhi^\top$ instead, due to its superior numerical stability and runtime complexity, and analogously for $\mPsi$. The two coincide when $\mPhi$ is orthonormal; in the general case, $\mPhi^\top \mathbf{Q}$ returns the inner products $\langle \mPhi_{:,j}, \mathbf{Q}\rangle$ for $j = 1, \ldots, k$, which form a legitimate function space representation of $\mathbf{Q}$. 
Please refer to Appendix~\ref{app:basis_vs_proj} for a detailed discussion.
\end{remark}

\subsection{Estimating the Operator $\mC$}\label{subsec:optimization}
Inspired by functional maps, we formulate it as a Tikhonov-regularized least-squares problem: find $\mC$ that minimizes the reconstruction error between query and transported key in the spectral domain,
\begin{equation}\label{eq:lsq_obj}
    \min_{\mC} \; \|\tQ - \mC \tK\|_F^2 + \lambda \|\mC\|_F^2
\end{equation}
where $\lambda > 0$ controls regularization strength. 
Setting the gradient to zero yields the closed-form solution:
\begin{equation}\label{eq:C_closed_form}
    \mC^* = \tQ \tK^\top \big(\tK \tK^\top + \lambda \mI_k\big)^{-1}
\end{equation}
Substituting into \eqref{eq:funcattentionintuition} gives the complete functional attention mechanism:
\begin{equation}\label{eq:full_func_attn}
    \fattn(\mQ, \mK, \mV) = \mPhi \left[\tQ \tK^\top 
    \big(\tK \tK^\top + \lambda \mI_k\big)^{-1}\right] \tV
\end{equation}
Beyond computational savings, the compact $k \times k$ operator acts as an implicit low-rank constraint on the attention mechanism, which can improve generalization on structured data.
\begin{remark}\label{remark:tikhonov}
    The Tikhonov term $\lambda \|\mC\|_F^2$ is introduced for numerical stabilization of the linear solve in Eq. (7). 
    We provide an empirical sensitivity analysis of $\lambda$ and the resulting condition number in Appendix \ref{app:lambda_sensitivity}.
\end{remark}

\subsection{Choice of Basis}\label{subsec:basis}
The basis matrices determine how input features are projected into spectral coefficients and how the functional attention operator $\mC$ captures correspondences in the compressed space. 

\paragraph{Fixed Spectral Basis.}
Classical approaches use predetermined bases such as Fourier bases~\citep{li2020fourier}. It is computationally  via fast transforms, but fixed bases assume a regular grid structure, which may not align with task-specific features in the data, hence limiting its expressiveness.

\paragraph{Learned Adaptive Basis.}
To address this limitation, we learn bases that adapt to the input data. Inspired by~\citep{wu2024transolver}, we construct data-dependent basis functions. Specifically, given input $\mX \in \mathbb{R}^{n \times d}$, the basis are estimated as following:
\begin{equation}\label{eq:learned_basis}
\mathcal{B} = \mathrm{Softmax}\big(\mathrm{Linear}(\mX) \ \big) \in \mathbb{R}^{n \times k}
\end{equation}
where $\mathcal{B}$ is $\mPhi$ (resp. $\mPsi$) for query (resp. key-value) space, and $\mathrm{Linear}: \mathbb{R}^d \to \mathbb{R}^k$ is a fully connected layer and $\mathrm{Softmax}()$ operation is applied along the $k$ dimension. We interpret the learned bases as a generalization of classical piecewise-constant ($P_0$) elements, as stated in the following proposition.

\begin{proposition}[Learnable Basis as Generalized $P_0$ Elements]\label{prop:basis}
Define the soft basis functions via a score function $s: \Omega \to \mathbb{R}^k$ for a point $x\in \Omega$ and any $j \in \{1,\cdots,k \}$:
\begin{equation}
\phi_j(x; \tau) = \frac{\exp(s_j(x)/\tau)}{\sum_{l=1}^{k} \exp(s_l(x)/\tau)}, \quad \tau > 0
\end{equation}
Then: (i) $\{\phi_j\}$ satisfies the partition-of-unity property $\sum_j \phi_j(x;\tau) = 1$ for all $\tau$; (ii) as $\tau \to 0$, $\phi_j(x;\tau) \to \mathbf{1}_{\Lambda_j}(x)$ where $\Lambda_j = \{x : s_j(x) > s_l(x), \forall l \neq j\}$, recovering classical $P_0$ piecewise constant elements.
\end{proposition}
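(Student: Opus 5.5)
The plan is to prove the two claims of Proposition~\ref{prop:basis} separately, working pointwise in $x\in\Omega$, since both are statements about the softmax of the fixed score vector $s(x)\in\mathbb{R}^k$.

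For (i), I sum the defining expression over $j$. The numerators $\exp(s_j(x)/\tau)$ add up exactly to the common denominator $\sum_{l}\exp(s_l(x)/\tau)$. That denominator is strictly positive for every $\tau>0$, so the ratio is well defined and equals $1$. No limit argument is needed here.

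For (ii), I fix $x$ and assume the maximizer $j^*=\arg\max_l s_l(x)$ is unique, i.e.\ $x$ lies in some $\Lambda_{j^*}$. I then normalize numerator and denominator by $\exp(s_{j^*}(x)/\tau)$ and write everything in terms of the gaps $\delta_l:=s_{j^*}(x)-s_l(x)>0$ for $l\neq j^*$. This gives
\begin{equation*}
\phi_{j^*}(x;\tau)=\Bigl(1+\sum_{l\neq j^*}e^{-\delta_l/\tau}\Bigr)^{-1},\qquad \phi_{j}(x;\tau)=\frac{e^{-\delta_j/\tau}}{1+\sum_{l\neq j^*}e^{-\delta_l/\tau}}\quad (j\neq j^*).
\end{equation*}
Since each $\delta_l>0$, we have $e^{-\delta_l/\tau}\to0$ as $\tau\to0^+$. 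Therefore $\phi_{j^*}\to1$ and $\phi_j\to0$ for $j\neq j^*$. Because $x\in\Lambda_{j^*}$ and $x\notin\Lambda_j$ for $j\neq j^*$, this is exactly $\phi_j(x;\tau)\to\mathbf{1}_{\Lambda_j}(x)$. (Alternatively, once $\phi_j\to0$ for all $j\neq j^*$ is established, $\phi_{j^*}\to1$ follows from (i).)

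The main obstacle is that the statement says ``for all $x$'', but ties in $s(x)$ break the claim. If $x$ has $m\geq2$ maximizing indices, then each of them tends to $1/m$ rather than to $0$ or $1$, while $\mathbf{1}_{\Lambda_j}(x)=0$ because $\Lambda_j$ uses strict inequalities. I will therefore state the limit on $\bigcup_j\Lambda_j$, where the argument above applies directly. For the tie set $\Omega\setminus\bigcup_j\Lambda_j$ I will compute the limiting value $1/m$ separately and remark that it is typically a measure-zero interface, e.g.\ when $s$ is affine in $x$ with distinct gradients. On that basis the convergence holds almost everywhere, which is the natural sense in which the hard partition recovers the $P_0$ elements.
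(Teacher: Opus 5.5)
Your proof is correct and matches the paper's argument essentially line for line: (i) the numerators sum to the common positive denominator, and (ii) you fix $x$ with a unique maximizer $j^*$, rewrite $\phi_j$ in terms of the gaps $\delta_l = s_{j^*}(x)-s_l(x)>0$, and let $e^{-\delta_l/\tau}\to 0$. Your treatment of ties improves on the paper, which assumes a unique maximizer ``without loss of generality''. As you note, at a tie of $m\ge 2$ indices the limit is $1/m$ while $\mathbf{1}_{\Lambda_j}(x)=0$, so the convergence holds only on $\bigcup_j \Lambda_j$, i.e.\ almost everywhere when the tie set is null.
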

A proof is provided in Appendix \ref{app:subsec:proof}. This formulation offers two advantages over fixed bases: \textit{(i)} the partition geometry adapts to each input, which allows for capturing intrinsic structure such as semantic, geometric, and physical information~\citep{wu2024transolver}; \textit{(ii)} the softmax normalization ensures that the weights remain bounded and sum to one, which prevents degenerate solutions. We further show that Functional Attention is equivalent to a learnable integral operator on $\Omega$ (proof in Appendix~\ref{app:subsec:integral}).
\begin{remark}[General Basis]\label{rmk:basis}
Unlike spectral methods that impose orthogonality or frequency alignment, the learned basis by Eq.~\eqref{eq:learned_basis} is unconstrained. This low-bias design empirically yields expressive representations (cf. ~\cref{tab:ablation_mechanism}).
\end{remark}

\subsection{Continuity of Functional Attention}\label{subsec:lipschitz}
\Cref{subsec:optimization,subsec:basis} introduced the two ingredients of \fattn: the Tikhonov-regularized operator $\mC$ in Eq.~\eqref{eq:C_closed_form} and the softmax basis $\mPhi, \mPsi$ in Eq.~\eqref{eq:learned_basis}, parameterized respectively by $\mW_{\mPhi}, \mW_{\mPsi} \in \mathbb{R}^{d \times k}$. We now show that the combination of these two ingredients yields a layer whose Lipschitz constant is controlled by the regularization parameter $\lambda$.
\begin{proposition}[Local Lipschitz Continuity]\label{prop:lipschitz}
Let $\mX \in \mathbb{R}^{n \times d}$ with $\|\mX\|_2 \le B$, and let 
$\mQ = \mX \mW_{\mQ}$, $\mK = \mX \mW_{\mK}$, $\mV = \mX \mW_{\mV}$. 
For any $\lambda > 0$, the functional attention layer 
$\mathcal{A}(\mX) := \fattn(\mQ, \mK, \mV)$ satisfies
\begin{equation}\label{eq:lipschitz_bound}
    \|\partial \mathcal{A}\|_F \;\le\; 
    \left( \frac{C_1}{\lambda} + \frac{C_2}{\lambda^2} \right) 
    \|\Delta \mX\|_F,
\end{equation}
where $C_1, C_2 > 0$ depend polynomially on $B$, $n$, and the weight norms $\|\mW_{\mQ}\|_2,$ $\|\mW_{\mK}\|_2,$ $\|\mW_{\mV}\|_2,$ $\|\mW_{\mPhi}\|_2,$ $\|\mW_{\mPsi}\|_2$, and $\|\partial \mathcal{A}\|_F$ is the (Fréchet) differential of $\mathcal{A}$. 
\end{proposition}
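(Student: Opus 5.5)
We bound the Fréchet differential of $\mathcal{A}(\mX)=\mPhi\,\tQ\tK^\top\widetilde{\mS}^{-1}\tV$, where $\widetilde{\mS}:=\tK\tK^\top+\lambda\mI_k$, $\tQ=\mPhi^\top\mQ$, $\tK=\mPsi^\top\mK$, $\tV=\mPsi^\top\mV$ (using the transpose projection of Remark~\ref{remark:transpose_pseudo}). The plan is to apply the product rule to this chain of factors and bound each resulting term. Throughout we use $\|\mA\mB\|_F\le\|\mA\|_2\|\mB\|_F$, so that exactly one factor in each term carries the Frobenius norm of a perturbation and all others carry spectral norms.

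First we collect uniform bounds on every factor. Each row of $\mPhi,\mPsi$ is a softmax vector, so it has $\ell_2$ norm at most $1$, which gives $\|\mPhi\|_2,\|\mPsi\|_2\le\sqrt n$. From $\|\mX\|_2\le B$ we get $\|\mQ\|_2\le B\|\mW_{\mQ}\|_2$, and likewise for $\mK$ and $\mV$. The coefficient matrices then satisfy $\|\tQ\|_2\le\sqrt n B\|\mW_{\mQ}\|_2$, and similarly for $\tK$ and $\tV$. The key bound is that $\widetilde{\mS}\succeq\lambda\mI_k$ because $\tK\tK^\top\succeq0$, hence $\|\widetilde{\mS}^{-1}\|_2\le1/\lambda$. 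This is the only place $\lambda$ enters, and it is why the final constant is a polynomial in $1/\lambda$.

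Next we obtain Lipschitz bounds for the building blocks.
\begin{itemize}
\item For the linear maps, $\|\partial\mQ\|_F\le\|\mW_{\mQ}\|_2\|\Delta\mX\|_F$, and likewise for $\mK$ and $\mV$.
\item For the bases, the softmax map is Lipschitz with a constant depending only on its temperature (a standard fact, e.g.\ Gao--Pavel). Composing it with the linear pre-activation $\mX\mW_{\mPhi}$ gives $\|\partial\mPhi\|_F\le L_{\mPhi}\|\mW_{\mPhi}\|_2\|\Delta\mX\|_F$, and the analogous bound for $\mPsi$.
\item The product rule then gives $\|\partial\tQ\|_F\le\|\partial\mPhi\|_F\|\mQ\|_2+\|\mPhi\|_2\|\partial\mQ\|_F\le(L_{\mPhi}B\|\mW_{\mPhi}\|_2+\sqrt n)\|\mW_{\mQ}\|_2\|\Delta\mX\|_F$. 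The same argument applies to $\tK$ and $\tV$ with $\mPsi$ in place of $\mPhi$.
\end{itemize}

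We then differentiate $\mathcal{A}$, which produces five terms: one each from $\partial\mPhi$, $\partial\tQ$, $\partial\tK^\top$, $\partial\widetilde{\mS}^{-1}$ and $\partial\tV$. For the inverse we use $\partial(\widetilde{\mS}^{-1})=-\widetilde{\mS}^{-1}(\partial\tK\,\tK^\top+\tK\,\partial\tK^\top)\widetilde{\mS}^{-1}$.

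The four terms that do not involve $\partial\widetilde{\mS}^{-1}$ each contain exactly one factor $\widetilde{\mS}^{-1}$, so they are bounded by constants divided by $\lambda$. Those constants are products of the quantities above and are polynomial in $n$, $B$ and the weight norms.

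The term from $\partial\widetilde{\mS}^{-1}$ contains two inverse factors, giving a $1/\lambda^2$ contribution. Its constant is of the form $2n^{3}B^4(L_{\mPsi}B\|\mW_{\mPsi}\|_2+\sqrt n)\|\mW_{\mQ}\|_2\|\mW_{\mK}\|_2^3\|\mW_{\mV}\|_2$. Summing all five bounds via the triangle inequality yields $(C_1/\lambda+C_2/\lambda^2)\|\Delta\mX\|_F$ with explicit polynomial $C_1$ and $C_2$.

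The main obstacle is the softmax-basis step. One needs a clean Lipschitz constant for the row-wise softmax, in Frobenius norm, that does not depend on $n$. Since the map acts row by row, the per-row constant (at most $1$ for temperature $1$) carries over directly to the Frobenius norm, so we will cite this rather than rederive it.

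A secondary point is that "polynomial dependence" should be read with $L_{\mPhi},L_{\mPsi}$ treated as fixed temperature constants. The bound is also only local: it holds on the ball $\|\mX\|_2\le B$, which is why the statement is phrased in terms of the differential rather than as a global Lipschitz estimate.
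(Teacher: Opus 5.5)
Your proposal is correct and follows the paper's proof essentially step for step. It uses the same spectral-norm estimates ($\|\mPhi\|_2,\|\mPsi\|_2\le\sqrt n$ from softmax rows, and $\|\widetilde{\mS}^{-1}\|_2\le 1/\lambda$ from $\widetilde{\mS}\succeq\lambda\mI_k$), the same Gao--Pavel softmax step and product-rule bound on $\partial\tQ$, and the same five-term expansion via $\partial(\widetilde{\mS}^{-1})=-\widetilde{\mS}^{-1}(\partial\widetilde{\mS})\widetilde{\mS}^{-1}$, arriving at the identical explicit $C_2$ (whose factor $n^{3}$, as in the paper, is valid but loose, since the $\sqrt n$ factors from $\mPhi,\tQ,\tK,\tK,\tV$ give only $n^{5/2}$).
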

A proof is provided in Appendix~\ref{app:lipschitz}. In particular, since the bound in \eqref{eq:lipschitz_bound} is linear in $\|\Delta \mX\|_F$ with a finite prefactor for any $\lambda > 0$, local Lipschitz continuity of $\mathcal{A}$ follows immediately as a direct consequence of Proposition~\ref{prop:lipschitz}. We observe that the regularization parameter $\lambda$ controls our upper bound on the Lipschitz constant, formalizing the role of the Tikhonov term described in the Remark \ref{remark:tikhonov}.

%% file: sections/4_results.tex
\section{Experiments}\label{sec:experiments}

\begin{figure}[t!]
  \vspace{-3pt}
  \centering
  \begin{subfigure}[b]{\columnwidth}
    \centering
    \includegraphics[width=\columnwidth]{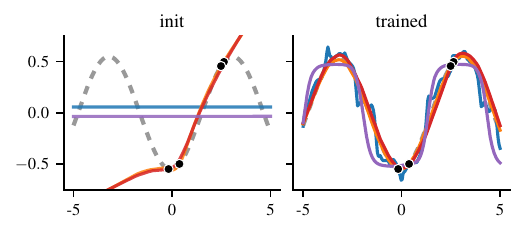}
  \end{subfigure}
  
  \vspace{-6pt}
  
  \begin{subfigure}[b]{\columnwidth}
    \centering
    \includegraphics[width=0.953\linewidth]{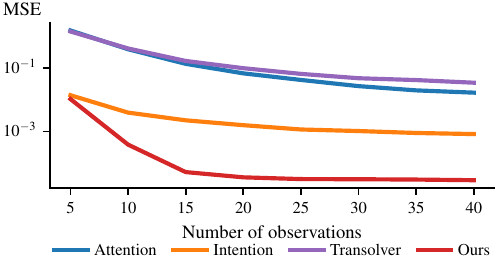}
  \end{subfigure}
  
  \vspace{-8pt}
  \caption{\textbf{Few-shot sinusoidal regression}. 
  (Top) Predictions at initialization and after training on data with context length = 4 (black dots). Ground truth shown as a gray dotted line. $k$ in \fattn{} and \#slices in Transolver are set to 2.
  (Bottom) Generalization performance (MSE) across varying context sizes. Our method achieves the lowest MSE and scales most effectively with increasing context size.
  }
  \label{fig:sinusoidal_combined}
  \vspace{-14pt}
\end{figure}

\input{tables/pde/standard_benchmark}

In this section, we conduct extensive experiments to thoroughly validate \fattn~ across diverse tasks, including few-shot regression, PDE solving, 3D point cloud segmentation and out-of-distribution generalization.

\subsection{Sinusoidal Regression} \label{subsec:exp1}
Following \cite{finn2017modelagnosticmetalearningfastadaptation}, we consider few-shot sinusoidal regression where each task is a sine wave with random amplitude $a\in[0.1,5.0]$ and phase $\gamma \in [0, \pi]$. The goal is to predict function values at any query points given a set of fixed observations. We compare with scaled dot-product attention \cite{vaswani2017attention}, Intention \cite{garnelo2023exploringspacekeyvaluequerymodels}, and Transolver \cite{wu2024transolver} and choose a cross-attention architecture where keys, queries, and values are processed by separate encoders.
To ensure fair comparison, we maintain similar parameter counts across all methods (cf. Appendix~\ref{app:benchmark_sinusoidal} for details).

Fig.~\ref{fig:sinusoidal_combined}~(Top) illustrates the results before and after training (init vs. trained). Both scaled dot-product attention and Transolver initialize as a flat line, exhibiting no inductive bias for regression, while Intention and ours capture sinusoidal structure even before training. With only 4 observations, Ours regresses a smooth and accurate solution, whereas the scaled dot-product attention produces noisy predictions. Transolver estimates smoother solutions, however far from the target ground truth given as few as only four observations. Intention is the only baseline that achieves comparable performance in this low-data regime.
Nevertheless, Ours consistently generalizes better across unseen numbers of observations, achieving errors that are up to three orders of magnitude lower than vanilla attention and Transolver, and one order of magnitude lower than Intention, as shown in Fig.~\ref{fig:sinusoidal_combined}~(Bottom). This result highlights the superior sample efficiency of \fattn~, which achieves lower error with only five observations than the scaled dot-product attention with forty observations. Indeed, while the scaled dot-product attention \emph{interpolates} values through normalized affinities, \fattn~ \emph{regresses} spectral coefficients via least-squares. Consequently, interpolation-based methods rely on dense sampling of the input domain, whereas our approach leverages structural priors encoded in the learned basis, enabling higher accuracy and improved generalization from sparse observations.

\subsection{PDE Solving}\label{subsec:exp2}

We evaluate on six PDE benchmarks from two physical domains: 
\emph{Fluid mechanics} including subsurface flow (Darcy), turbulent flow (Navier-Stokes), and aerodynamics (Airfoil, Pipe); 
\emph{Solid mechanics} including elastic (Elasticity) and plastic deformation (Plasticity). These tasks span point clouds, structured, and unstructured meshes \cite{li2020fourier, li2023fourier}. 

We compare ours against strong neural operator methods, spanning frequency-domain approaches: e.g. FNO~\citep{li2020fourier}, GEO-FNO~\citep{li2023fourier}, LSM~\citep{wu2023solvinghighdimensionalpdeslatent}, and attention-based architectures: e.g. Galerkin Transformer~\citep{cao2021choose}, which first applied attention to neural operator learning, and Transolver~\citep{wu2024transolver}, which is the most recent physics-aware attention method.
To ensure a fair comparison, we follow the experimental settings of Transolver~\citep{wu2024transolver}.
All experiments are conducted on a single Nvidia A40 GPU and repeated three times. See Appendix~\ref{app:benchmark_pde} for details. 

Tab.~\ref{tab:PDEresults} shows that our approach achieves the best performance on five out of six PDE benchmarks, indicating strong and consistent performance across a diverse range of physical systems. Compared to Transolver, a related transformer-based method (cf. Appendix~\ref{app:trans_vs_fattn}), our method yields relative improvements between $6\%$ and $26.3\%$.
LNO performs competitively on the Pipe task; however, ours works on par in this task and consistently better in all remaining tasks. The good performance of LNO is likely due to its ``physics-cross-attention'', which is effective in capturing certain problem structures.
These results suggest that efficiently learning an optimal linear operator between queries and keys provides a stronger inductive bias than softmax-based attention. Among other baselines, frequency-domain methods tend to struggle on complex geometries, where fixed spectral representations become less well aligned with the underlying domain structure. Earlier attention-based approaches, such as Galerkin Transformers, apply attention directly over mesh points, which can limit their ability to efficiently capture global, physics-relevant correlations. See Fig.~\ref{fig:pde_vis} and  Appendix \ref{app:visualization} for visual examples. 

\begin{figure}[t!]
    \vspace{-5pt}
    \centering
    \setlength{\tabcolsep}{1pt}
    \renewcommand{\arraystretch}{0.8}
    
    \begin{tabular}{@{}c@{\hspace{2pt}}c@{\hspace{2pt}}c@{\hspace{2pt}}c@{\hspace{1pt}}c@{}}
        & \scriptsize GT & 
        \scriptsize Transolver &
        \scriptsize\color{blue} Ours & \\[1pt]
        
        \raisebox{0.12\linewidth}{\rotatebox{90}{\scriptsize\textit{Elasticity}}} &
        \begin{tabular}[b]{@{}c@{}}
            \includegraphics[width=0.29\linewidth]{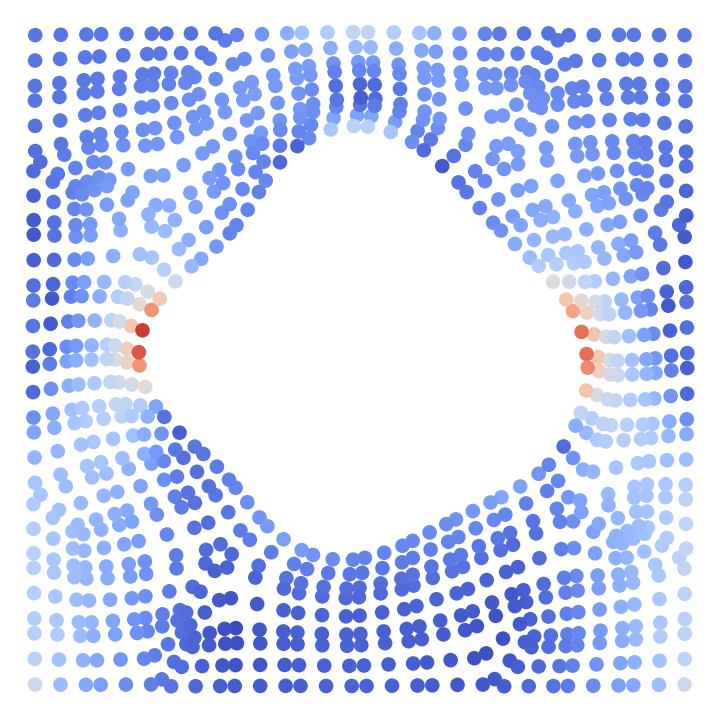}\\[-2pt]
            \scriptsize\phantom{Error: 0.0000}
        \end{tabular} &
        \begin{tabular}[b]{@{}c@{}}
            \includegraphics[width=0.29\linewidth]{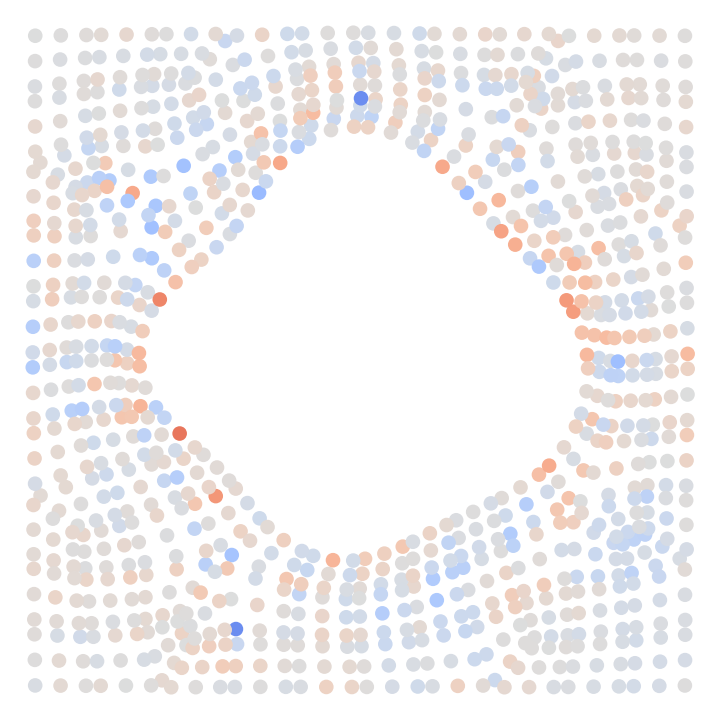}\\[-2pt]
            \scriptsize Error: 0.51
        \end{tabular} &
        \begin{tabular}[b]{@{}c@{}}
            \includegraphics[width=0.29\linewidth]{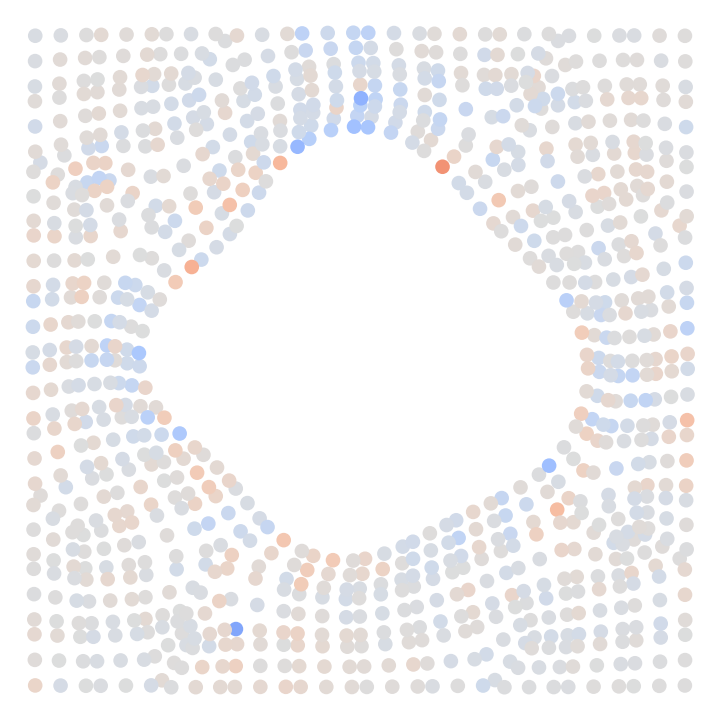}\\[-2pt]
            \scriptsize\color{blue} Error: 0.28
        \end{tabular} &
        \raisebox{0.02\linewidth}{\includegraphics[height=0.28\linewidth]{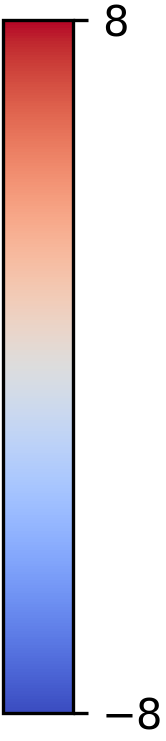}} \\[2pt]
         
        \raisebox{0.12\linewidth}{\rotatebox{90}{\scriptsize\textit{Darcy}}} &
        \begin{tabular}[b]{@{}c@{}}
            \includegraphics[width=0.28\linewidth]{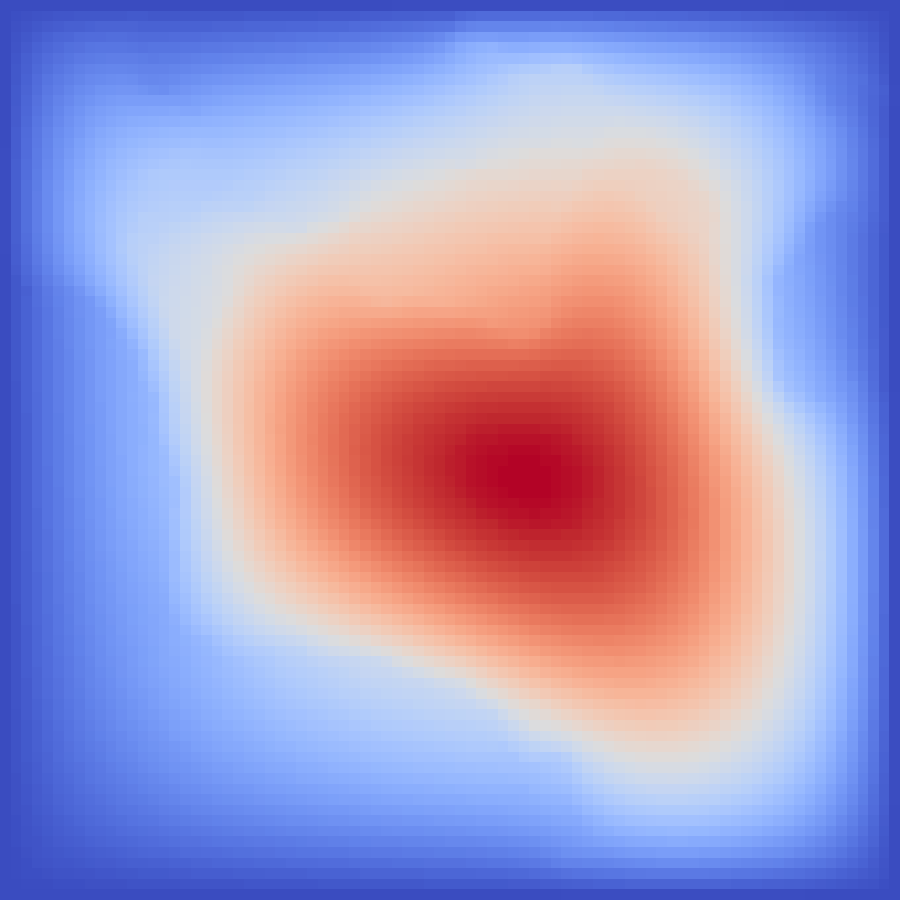}\\[-2pt]
            \scriptsize\phantom{Error: 0.0000}
        \end{tabular} &
        \begin{tabular}[b]{@{}c@{}}
            \includegraphics[width=0.28\linewidth]{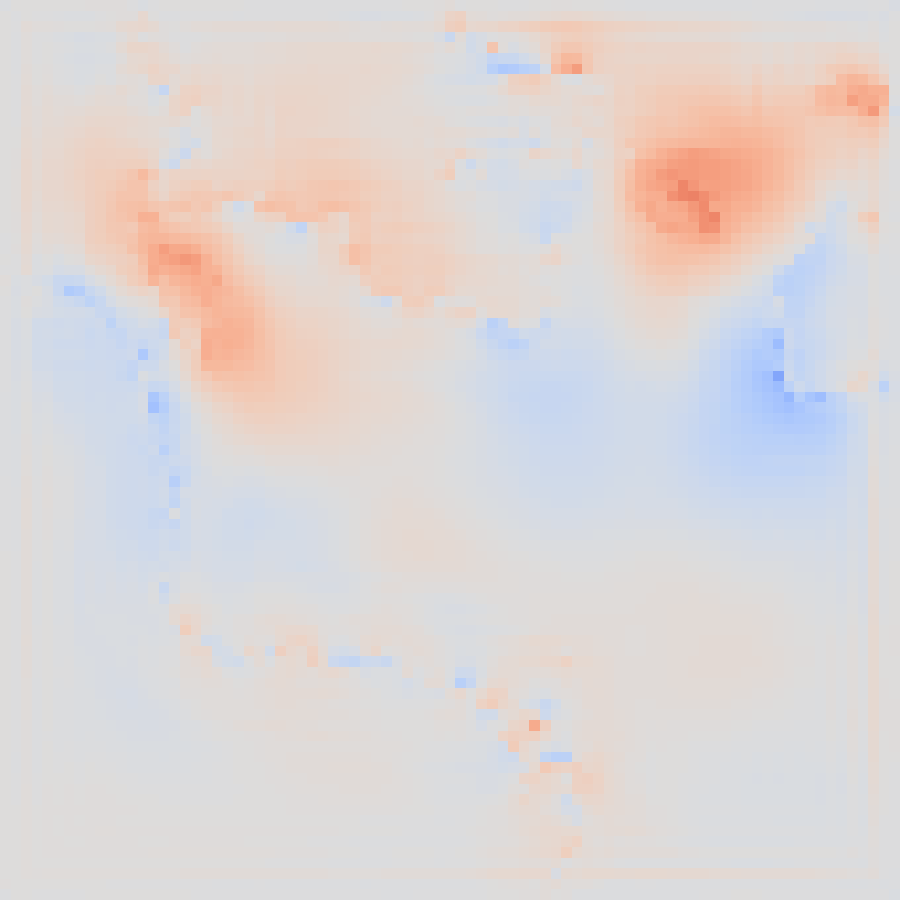}\\[-2pt]
            \scriptsize Error: 0.49
        \end{tabular} &
        \begin{tabular}[b]{@{}c@{}}
            \includegraphics[width=0.28\linewidth]{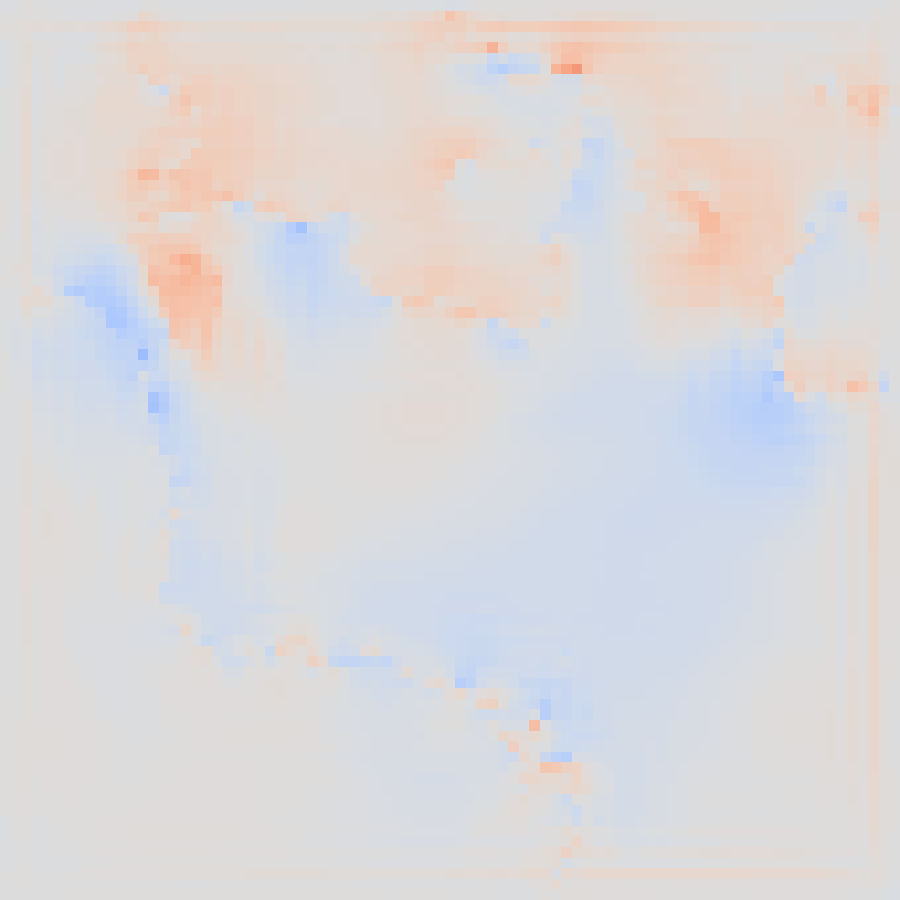}\\[-2pt]
            \scriptsize\color{blue} Error: 0.32
        \end{tabular} &
        \raisebox{0.01\linewidth}{\includegraphics[height=0.28\linewidth]{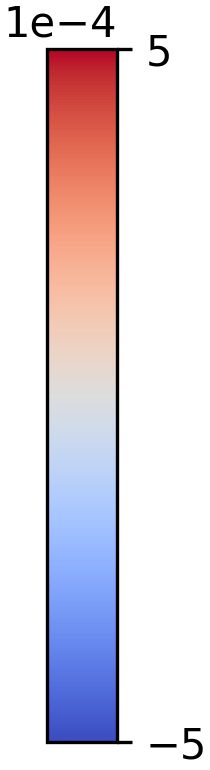}} \\
    \end{tabular}
    
    \vspace{-5pt}
    \caption{\textbf{PDE solving visualization.} Ground truth and error maps for Elasticity and Darcy benchmarks. Our method achieves lower error (relative $L_2, \times 100$) in both domains.}
    \label{fig:pde_vis}
    \vspace{-11pt}
\end{figure}

\vspace*{-5pt}
\subsection{RNA Segmentation}\label{subsec:exp3}
\input{tables/segmentation/rna}

We also apply \fattn~to 3D tasks. To this end, we perform 3D segmentation tasks on the RNA dataset~\cite{poulenard2019effectiverotationinvariantpointcnn}, which contains 640 ribosomal RNA structures from the Protein Data Bank~\cite{proteindatabank}. Each surface is represented as a point cloud of 4096 points, annotated with 259 functional categories. 
We apply random rotation augmentation for all methods taking raw coordinates of point clouds as input. Tab.~\ref{tab:rna_segmentation} summarizes the segmentation accuracy. \fattn~achieves the highest accuracy, outperforming both classical point cloud architectures, e.g. PointNet++ and recent operator-based approaches, e.g. DiffusionNet and Transolver. We hypothesize that linear solving enables signed attention weights, which provides explicit contrastive capacity that is crucial for fine-grained segmentation.

\vspace*{-5pt}
\subsection{Out-of-Distribution (OOD) Generalization}\label{subsec:ood}
To evaluate the generalizability of learned representations beyond the training distribution, we conduct experiments on OOD airfoil design tasks \cite{bonnet2023airfranshighfidelitycomputational}. Unlike standard benchmarks where training and test samples share the same range of physical parameters, the OOD setting presents a more challenging scenario, in which the test set contains unseen Reynolds numbers and angles of attack. 

\input{tables/pde/ood_airfrans}

As shown in Tab.~\ref{tab:ood}, ours consistently generalizes better. On OOD Reynolds, ours achieves a relative error of 23.4\ and Spearman's rank correlation of 99.4\%, outperforming the closest competitor by a large margin of 8.8\%. 
On OOD Angles, ours further reduces the relative error to 13.3\% while maintaining the Spearman's rank correlation of 99.7\%, improving upon the closest competitor by 9.5\%. 
These results suggest that \fattn~not only fits the training data effectively, but also captures transferable physical patterns that generalize to unseen parameter regimes, highlighting the advantage of learning optimal linear map between functional spaces than tokenwise affinities.

\input{tables/pde/complex_geometry}

\subsection{Complex Geometry}
\label{sec:complex_geometry}

Beyond the irregular meshes covered in Section~\ref{subsec:exp2}, we test \fattn~ on a challenging setting that combines a non-rectangular domain and a geometric re-entrant corner. Following~\citep{tripura2022wavelet}, we evaluate on the 2D Darcy flow over a triangular domain with a notch, where the notch tip induces sharp local features in the solution field that are particularly challenging for fixed-basis spectral methods. As shown in Table~\ref{tab:notch}, \fattn~ achieves a relative $L^2$ error of $0.64\%$, a $30.9\%$ relative improvement over WNO~\citep{tripura2022wavelet}, which is specifically designed for complex-geometry PDEs. In contrast, grid-based spectral methods such as dgFNO+ perform substantially worse in this setting ($7.82\%$), showing that fixed Cartesian bases are poorly suited to non-rectangular domains with sharp local features. These results indicate that the learned basis in \fattn~adapts to the underlying geometry and remains accurate near the notch tip.

\input{tables/pde/superresolution}
\subsection{Zero-Shot Super-Resolution}
A key property of neural operators is their discretization invariance, namely the ability to generalize across different mesh resolutions. We evaluate this by training on coarse grids and testing on finer resolutions. Specifically, we train on the 1D Burgers' equation dataset \cite{li2020fourier} at a resolution of 2048 grid points, and evaluate on the full resolution of 8192 grid points without any fine-tuning. Tab.~\ref{tab:superres} shows that our \fattn~maintains strong performance under large resolution changes, demonstrating that the learned functional map captures resolution-independent structure of the underlying dynamic systems governed by PDE. For details, we refer the reader to Appendix~\ref{app:benchmark_pde}.

\input{tables/pde/abalation_main}
\subsection{Ablation Study}\label{main:exp:ablation}
\paragraph{Number of Bases.}
The number of bases $k$ controls the expressiveness of the learned functional attention. We study its effect on model performance on three PDE benchmarks. 
Tab.~\ref{tab:ablation_modes_main} shows the effect of varying the number of bases. We observe that increasing the number of bases generally improves performance up to a certain point, after which performance slightly degrades due to potential overfitting. While larger values such as 256 or 512 yield slight improvements on specific datasets, they also introduce additional computational overhead. As practical guidance, $k = 64$ works as a robust default within 5\% of the best across all benchmarks. For smoother fields (Darcy, Pipe), $k = 32–64$ suffices, for high-frequency fields (Elasticity, Navier-Stokes), $k=128–256$ yields further gains. See Appendix \ref{app:benchmark_pde} for additional results. 

\paragraph{Choice of Basis.}
We investigate the impact of different basis function by ablating on three choices: the fixed Fourier basis, the learnable basis as in Eq.\eqref{eq:learned_basis} with additional orthogonal constraints, and the learnable basis as in Eq.\eqref{eq:learned_basis}. 
We evaluate these choices under three attention mechanisms: Galerkin attention~\citep{cao2021choose}, scaled dot-product attention~\citep{vaswani2017attention}, and \fattn, and report the results in Tab.~\ref{tab:ablation_mechanism}.
Interestingly, freely learned basis Eq.~\eqref{eq:learned_basis} (last row in Tab.\ref{tab:ablation_mechanism}) without enforcing orthogonality performs better, which coincides with  observations in other works~\cite{marin2020correspondence}. 
This behavior may stem from the fact that optimizing over the orthogonal group is inherently more difficult than in Euclidean space, where commonly used gradient-based optimizers can more reliably identify good local minima.
Moreover, even ours with fixed Fourier basis outperforms all baselines, underpinning the expressiveness of our functional attention framework.
The preferable performance with freely learned basis is also observed in Galerkin and Attention.

\begin{table}[t!]
\vspace{-0pt}
\centering
\caption{\textbf{Ablation on the choice of basis.} We study the effect of different bases on the performance of ours and two baselines, and report the relative $L_2$  ($\times 100, \downarrow$) on the Airfoil dataset. Note that the Fourier coefficients operate in the complex domain, where standard attention mechanisms are not directly applicable.}\label{tab:ablation_mechanism}
\begin{tabular}{lccc}
\toprule
\textbf{Choice of Basis}  & Galerkin & Attention &\textbf{Ours} \\
\midrule
Fourier & 0.65 & / & 0.51 \\
Learnable + orth. & 0.62 & 0.65  & 0.50 \\
\textbf{Learnable} & 0.59 & 0.53 & \textbf{0.43} \\
\bottomrule
\end{tabular}
\vspace{-15pt}
\end{table}

%% file: tables/pde/standard_benchmark.tex
\begin{table*}[t!]
\centering
\caption{\textbf{Quantitative results on PDE benchmarks.} Relative $L_2$ loss to ground truth ($\times 100$, $\downarrow$)  is reported. Best results are in \textbf{bold}, second best are \underline{underlined}. ``/'' indicates the method is not applicable. Ours reaches the SOTA results and outperforms in almost all datasets. See Tab.~\ref{tab:pdeconfig} in Appendix \ref{app:benchmark_pde} for implementation details of our \fattn.}
\label{tab:PDEresults}
\begin{small}
\begin{sc}
\setlength{\tabcolsep}{4.5pt}
\begin{tabular}{llcccccc}
\toprule
& Method & Elasticity & Airfoil & Darcy & Pipe & Navier-Stokes & Plasticity \\ 
\midrule
\multirow{7}{*}{\rotatebox{90}{\textbf{Frequency}}}
& FNO \citeyearpar{li2020fourier}            & /     & /     & 1.08 & /     & 15.56 & /      \\
& WMT \citeyearpar{NEURIPS2021_c9e5c2b5}                             & 3.59  & 0.75  & 0.82 & 0.77  & 15.41 & 0.76 \\
& U-FNO \citeyearpar{wen2022ufnoenhancedfourier}                     & 2.39  & 2.69  & 1.83 & 0.56  & 22.31 & 0.39 \\
& Geo-FNO \citeyearpar{li2023fourier}                                       & 2.29  & 1.38  & 1.08    & 0.67  & 15.56 & 0.74 \\
& U-NO \citeyearpar{rahman2023unoushapedneuraloperators}             & 2.58  & 0.78  & 1.13 & 1.00  & 17.13 & 0.34 \\
& F-FNO \citeyearpar{tran2023factorizedfourierneuraloperators}       & 2.63  & 0.78  & 0.77 & 0.70  & 23.22 & 0.47 \\
& LSM \citeyearpar{wu2023solvinghighdimensionalpdeslatent}           & 2.18  & 0.59  & 0.65 & 0.50  & 15.35 & 0.25 \\
\midrule
\multirow{8}{*}{\rotatebox{90}{\textbf{Attention}}}
& Galerkin \citeyearpar{cao2021choose}     & 2.40  & 1.18  & 0.84 & 0.98  & 14.01 & 1.20 \\
& HT-Net \citeyearpar{liu2023htnet}                                  & /     & 0.65  & 0.79 & 0.54  & 18.47 & 3.33 \\
& OFormer \citeyearpar{li2022transformer}& 1.83  & 1.83  & 1.24 & 1.68  & 17.05 & 0.17 \\
& GNOT \citeyearpar{hao2023gnot}                & 0.86  & 0.76  & 1.05 & 0.47  & 13.80 & 3.36 \\
& FactFormer \citeyearpar{li2023scalabletransformerpdesurrogate}     & /     & 0.71  & 1.09 & 0.60  & 12.14 & 3.12  \\
& ONO \citeyearpar{xiao2024improvedoperatorlearningorthogonal}       & 1.18  & 0.61  & 0.76 & 0.52  & 11.95 & 0.48 \\
& LNO \citeyearpar{wang2024latentneuraloperatorsolving}              & 0.73  & 0.54  & 0.60 & \textbf{0.25} & \underline{8.45} & 0.31 \\
& Transolver \citeyearpar{wu2024transolver} & \underline{0.64} & \underline{0.53} & \underline{0.57} & 0.31 & 9.44 & \underline{0.13} \\
\midrule
& \textbf{Ours} & \textbf{0.50} & \textbf{0.43} & \textbf{0.42} & \underline{0.29} & \textbf{8.00} & \textbf{0.11} \\
\bottomrule
\end{tabular}
\end{sc}
\end{small}
\vspace{-0.1in}
\end{table*}

%% file: tables/segmentation/rna.tex
\begin{table}[t]
\centering
\caption{\textbf{RNA point cloud segmentation.}
``xyz'' and ``hks'' indicates network input to be either xyz coordinates or heat kernel signatures~\cite{sun2009concise}. Ours achieves the best segmentation accuracy.}
\label{tab:rna_segmentation}
\begin{tabular*}{\linewidth}{@{\extracolsep{\fill}} lc @{}}
\toprule
\textbf{Method} & \textbf{Accuracy ($\uparrow$)} \\
\midrule
PointNet++ \cite{qi2017pointnetdeeplearningpoint} & 74.4\% \\
PCNN \cite{atzmon2018pointconvolutionalneuralnetworks} & 78.0\% \\
SPHNet \cite{poulenard2019effectiverotationinvariantpointcnn} & 80.1\% \\
DiffusionNet - hks \cite{sharp2022diffusionnetdiscretizationagnosticlearning} & 82.6\% \\
DiffusionNet - xyz \cite{sharp2022diffusionnetdiscretizationagnosticlearning}& 85.1\% \\
Transolver - xyz \cite{wu2024transolver}& 87.5\% \\
\midrule
\textbf{Ours} - xyz & \textbf{89.0\%} \\
\bottomrule
\end{tabular*}
\vspace{-5pt}
\end{table}

%% file: tables/pde/ood_airfrans.tex
\begin{table}[t!]
    \caption{\textbf{OOD generalization on AirfRANS}~\cite{bonnet2023airfranshighfidelitycomputational}. Relative error of the lift coefficient ($C_L$, \%) and the Spearman's rank correlations ($\rho_L$, \%) are reported as in \citet{wu2024transolver}. All values are scaled by 100. Ours achieves the best generalization performance.}
    \label{tab:ood}
    \vspace{-2pt}
    \centering
    \begin{small}
    \begin{sc}
    \setlength{\tabcolsep}{2.5pt}
    \scalebox{1}{
    \begin{tabular}{lcccc}
        \toprule
        \multirow{3}{*}{Models} 
            & \multicolumn{2}{c}{OOD Reynolds} 
            & \multicolumn{2}{c}{OOD Angles} \\
        \cmidrule(lr){2-3}\cmidrule(lr){4-5}
        & $C_{L}\,(\downarrow)$ & $\rho_{L}(\uparrow)$ 
        & $C_{L}\,(\downarrow)$ & $\rho_{L}(\uparrow)$ \\
        \midrule
        Simple MLP  & 62.1 & 95.8 & 41.3 & 95.7 \\
        GraphSAGE \citeyearpar{hamilton2018inductiverepresentationlearninglarge} 
                    & 43.3 & 97.1 & 25.4 & 98.9 \\
        PointNet \citeyearpar{qi2017pointnetdeeplearningpoint} 
                    & 38.4 & 98.1 & 44.3 & 97.8 \\
        Graph U-Net \citeyearpar{gao2019graphunets} 
                    & 46.6 & 96.5 & 37.6 & 98.2 \\
        \scalebox{0.95}{MeshGraphNet \citeyearpar{pfaff2021learningmeshbasedsimulationgraph}} 
                    & 177.2 & 76.3 
                    & 65.3 & 89.3 \\
        \midrule
        GNO \citeyearpar{li2023fourier} 
                    & 44.1 & 98.8 & 30.4 & 98.8 \\
        Galerkin \citeyearpar{cao2021choose} 
                    & 46.2 & 98.3 & 38.1 & 98.2 \\
        GNOT \citeyearpar{hao2023gnot} 
                    & 32.7 & 98.7 & 35.0 & 98.7 \\
        GINO \citeyearpar{li2023geometry} 
                    & 41.8 & 96.5 & 25.8 & 99.2 \\
        Transolver \citeyearpar{wu2024transolver} 
                    & 32.2  & 98.7 & 22.8 & 99.0 \\
        \midrule
        \textbf{Ours} 
                    & \textbf{23.4} & \textbf{99.4} 
                    & \textbf{13.3} & \textbf{99.7} \\
        \bottomrule
    \end{tabular}}
    \end{sc}
    \end{small}
    \vspace*{-10pt}
\end{table}

%% file: tables/pde/complex_geometry.tex
\begin{table}[ht]
\centering
\caption{\textbf{2D Darcy flow with a triangular notch domain.}
Relative $L^2$ error (\%, $\downarrow$) is reported. Baseline results are taken from~\cite{tripura2022wavelet} (Table~3); $\dagger$ denotes our reproduction using the released code under a comparable parameter budget. Ours achieves the best performance on this singular-domain task.}
\label{tab:notch}
\begin{tabular*}{\linewidth}{@{\extracolsep{\fill}} lc @{}}
\toprule
\textbf{Method} & \textbf{Rel. $L^2$ ($\downarrow$)} \\
\midrule
DeepONet \cite{lu2019deeponet} & 2.64 \\
POD-DeepONet \cite{lu2022comprehensive} & 1.00 \\
MWT \cite{gupta2021multiwavelet} & 0.87 \\
dgFNO+ \cite{lu2022comprehensive} & 7.82 \\
WNO$^{\dagger}$ \cite{tripura2022wavelet} & 0.92 \\
\midrule
\textbf{Ours} & \textbf{0.64} \\
\bottomrule
\end{tabular*}
\vspace{-5pt}
\end{table}

%% file: tables/pde/superresolution.tex
\begin{table}[ht]
    \vspace{-4pt}
    \caption{\textbf{Quantitative results of super-resolution task.} We utilize the 1D Burgers' equation dataset~\cite{li2020fourier}. All modes are trained on $2048$ grid points and tested on $8192$. Relative $L_2$ error ($\times 1e3$) is reported. For a fair comparison, all methods are adjusted to have a similar number of parameters. Our method generalizes best to higher resolutions.}
    \label{tab:superres}

    \vspace{-3pt}
    \centering
    \setlength{\tabcolsep}{5pt}
    \begin{tabular}{lcccc}
        \toprule
        Models &  FNO  & Galerkin  & Transolver &  \textbf{Ours}  \\
        \midrule
        Error ($\downarrow$) & 1.195 & 1.175 & 1.243 & \textbf{1.081}\\
        \bottomrule
    \end{tabular}
    \vspace{-8pt}
\end{table}

%% file: tables/pde/abalation_main.tex
\begin{table}[ht]
\centering
\caption{\textbf{Ablation on number of bases $k$.} We ablate on three PDE datasets and report the relative $L_2$ error ($\times 100$, $\downarrow$). }
\label{tab:ablation_modes_main}
\begin{tabular*}{\columnwidth}{@{\extracolsep{\fill}}lcccccc}
\toprule
\textbf{Dataset} & \multicolumn{6}{c}{\textbf{\#Bases}} \\
\cmidrule(lr){2-7}
& \textbf{16} & \textbf{32} & \textbf{64} & \textbf{128} & \textbf{256} & \textbf{512} \\
\midrule
Elasticity & 0.65 & 0.55 & 0.50 & \underline{0.49} & \textbf{0.48} & 0.56 \\
Darcy      & 0.49 & 0.45 & \underline{0.42} & 0.44 & 0.43 & \textbf{0.41} \\
Airfoil    & 0.51 & 0.52 & \underline{0.43} & \textbf{0.42} & 0.47 & 0.48 \\
\bottomrule
\end{tabular*}
\vspace{-10pt}
\end{table}

%% file: sections/5.Conclusion.tex
\section{Conclusion}
By bridging functional map theory with attention mechanisms, we introduce a principled framework for capturing functional structure in operator learning. Rather than operating at the token level, our approach lifts attention to the functional space, enabling greater geometric flexibility and expressiveness. We further instantiate this functional correspondence framework through an optimization-based operator that links the query and key–value spaces, together with learnable  bases. In addition, we provide a theoretical analysis of \fattn, proving its Lipschitz continuity with respect to the input functions and thereby establishing its stability. Finally, we demonstrate the versatility of our method across a range of tasks. On PDE benchmarks, our model consistently achieves recent state-of-the-art methods in accuracy and exhibits superior generalization under domain shifts. In particular, our results on complex geometries further highlight the ability of \fattn~ to adapt to non-trivial domains with sharp local features, underscoring its suitability for PDEs defined on general geometries. For 3D segmentation, we achieve higher accuracy than competing approaches.
More broadly, this functional perspective opens new avenues for designing attention mechanisms that are structure-aware, resolution-invariant, and naturally suited to operator learning.

\vspace{-5pt}
\paragraph{Limitations \& Future Works.}

The learned basis uses a simple softmax projection; exploring more expressive or structured designs remains an open direction. While functional attention shows favorable inductive biases for operator learning, rigorous theoretical analysis, such as approximation guarantees or generalization bounds, is still needed. Formally connecting the compression ratio $k/n$ to approximation error would further strengthen our theoretical foundation. Additionally, other regularizations, e.g., $L_1$ penalties, may improve performance in specific applications. Finally, investigating functional attention in domains with less direct function-space interpretations, such as natural language processing, remains a promising future task.

%% file: tables/sinusoidal/model_config.tex
\begin{table}[ht]
\centering
\caption{Model configurations for sinusoidal regression. All models share parameters between key and query encoders.}
\label{tab:Sinusoidal_model_comparison}
\begin{small}
\setlength{\tabcolsep}{4pt}
\begin{tabular}{l|cccc}
\toprule
& \textbf{Attention} & \textbf{Transolver} & \textbf{Intention} & \fattn{} \\
\midrule
Key/Query Enc.   & MLP(3, 256) & MLP(4, 128) & MLP(4, 1000) & MLP(4, 128) \\
Value Enc.       & MLP(3, 128) & --          & --           & --          \\
Output Dec.      & MLP(2, 128) & --          & --           & --          \\
Heads            & 4           & 8           & 8            & 8           \\
Learning Rate    & $10^{-3}$   & $10^{-4}$   & $3 \times 10^{-4}$ & $10^{-4}$ \\
\bottomrule
\end{tabular}
\end{small}
\end{table}

%% file: tables/pde/benchmark_summary.tex
\begin{table*}[h]
\caption{Summary of benchmark datasets.}
\label{tab:pdedatasets}
\centering
\begin{tabular}{llclcc}
\toprule
\textbf{Benchmark} & \textbf{Input} & \textbf{Spatial Resolution} & \textbf{Input length} & \textbf{Output} & \textbf{Train/Test} \\ 
\midrule
Elasticity & Domain geometry & Point cloud & 972 & Displacement $\mathbf{u}$ & 1000/200 \\
Airfoil & Airfoil shape & $221\times 51$ grid & 11{,}271 & Density field $\rho$ & 1000/200 \\
Darcy & Permeability & $85\times 85$ grid & 7{,}225 & Pressure $u$ & 1000/200 \\
Darcy-Notch & Boundary condition & $51\times 51$ grid & 2{,}601 & Pressure $u$ & 1900/100 \\
Pipe & Pipe geometry & $129\times 129$ grid & 16{,}641 & Velocity $u_x$ & 1000/200 \\
Navier-Stokes & Vorticity $w_{0:T}$ & $64\times 64$ grid & 4{,}096 & Vorticity $w_{T:2T}$ & 1000/200 \\
Plasticity & Punch profile & $101\times 31 \times T$ & 3{,}131 & Displacement $\mathbf{u}$ & 900/80 \\
\bottomrule
\end{tabular}
\vspace{-5pt}
\end{table*}

%% file: tables/appendix/training_details_pde.tex
\begin{table}[ht]
\centering
\caption{Training and model configurations for \fattn. Training configurations follow prior works~\citep{hao2023gnot,wu2024transolver} without extra tuning. $\mathcal{L}_g$ denotes spatial gradient regularization~\citep{xiao2024improvedoperatorlearningorthogonal}. $\mathcal{L}_v$ and $\mathcal{L}_s$ denote volume and surface losses respectively.}
\label{tab:pdeconfig}
\begin{small}
\setlength{\tabcolsep}{3.5pt}
\begin{tabular}{@{}lccccccccc@{}}
\toprule
& \multicolumn{5}{c}{\textbf{Training Configuration}} & \multicolumn{4}{c}{\textbf{Model Configuration}} \\
\cmidrule(lr){2-6} \cmidrule(lr){7-10}
\textbf{Benchmark} & \textbf{Loss} & \textbf{Epochs} & \textbf{LR} & \textbf{Optim} & \textbf{Batch} & \textbf{Layers} & \textbf{Heads} & \textbf{Channels} & \textbf{Modes} \\
\midrule
Elasticity    & \multirow{5}{*}{Rel. $L_2$} & \multirow{6}{*}{500} & \multirow{6}{*}{$10^{-3}$} & \multirow{6}{*}{AdamW} & 1 & 8 & 8 & 128 & 64 \\
Plasticity    &  &  &  &  & 8 & 8 & 8 & 128 & 64 \\
Airfoil       &  &  &  &  & 4 & 8 & 8 & 128 & 64 \\
Pipe          &  &  &  &  & 4 & 8 & 8 & 128 & 64 \\
Navier-Stokes &  &  &  &  & 2 & 8 & 8 & 256 & 32 \\
Darcy w/ Notch  &  &  &  &  & 25 & 8 & 8 & 128 & 64 \\
\cmidrule(lr){2-2}
Darcy         & Rel. $L_2 + 0.1\mathcal{L}_g$ &  &  &  & 4 & 8 & 8 & 128 & 64 \\
\midrule
AirfRANS      & $\mathcal{L}_v + \mathcal{L}_s$ & 400 & $10^{-3}$ & Adam & 1 & 8 & 8 & 256 & 32 \\
\bottomrule
\end{tabular}
\end{small}
\end{table}

%% file: tables/pde/abalation_modes.tex
\begin{table*}[h]
\centering
\caption{Ablation study on the number of bases. We report relative $L^2$ error (\%) across six benchmark tasks. Inference time on Elasticity (ms/sample) and peak GPU memory (GB) are also reported using Nvidia A2000.}
\label{tab:ablation_modes}
\begin{tabular}{cccccccccc}
\toprule
\multirow{2}{*}{\textbf{Modes}} 
& \multicolumn{6}{c}{\textbf{Relative $L^2$ Error (\%) $\downarrow$}} 
& \multicolumn{2}{c}{\textbf{Computational Cost}} \\
\cmidrule(lr){2-7} \cmidrule(lr){8-9}
& \textbf{Elasticity} & \textbf{Plasticity} & \textbf{Airfoil} & \textbf{Pipe} & \textbf{NS} & \textbf{Darcy} 
& \textbf{Time} $\downarrow$ & \textbf{Memory} $\downarrow$ \\
\midrule
16  & 0.65 & 0.12 & 0.51 & 0.30 & 13.53 & 0.49 & 12.52 & 0.02 \\
32  & 0.55 & 0.13 & 0.52 & 0.31 & 8.09 & 0.45 & 13.28 & 0.02 \\
64  & 0.50 & 0.11 & 0.43 & 0.29 & 8.00 & 0.42 & 13.65 & 0.02 \\
128 & 0.49 & 0.13 & \textbf{0.42} & \textbf{0.27} & \textbf{7.82} & 0.44 & 16.35 & 0.02 \\
256 & 
\textbf{0.48} & \textbf{0.10} & 0.47 & 0.29 & 8.15 & 0.43 & 34.60 & 0.04 \\
512 & 0.56 & 0.13 & 0.48 & 0.35 & 8.32 & \textbf{0.41} & 75.48 & 0.09 \\
\bottomrule
\end{tabular}
\end{table*}

%% file: tables/appendix/transpose_pseudo.tex
\begin{figure}[h]
    \centering
    \begin{minipage}[c]{0.5\linewidth}
        \centering
        \includegraphics[width=\linewidth]{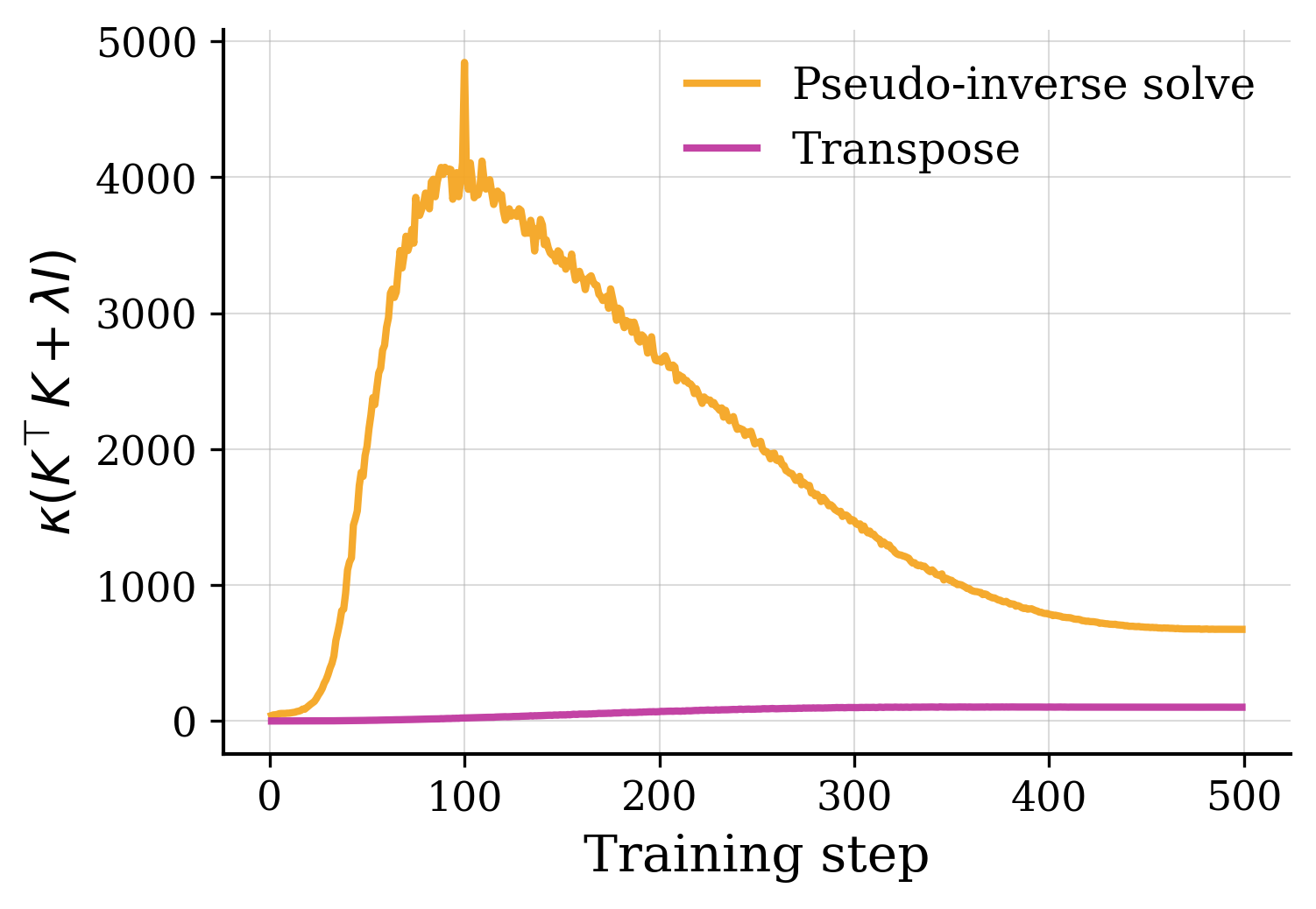}
        \captionof{figure}{Condition number of the inverted matrix in Eq.~\eqref{eq:full_func_attn} during training on Elasticity, comparing the Tikhonov-stabilized pseudo-inverse and the transpose.}
        \label{fig:transpose_vs_pinv}
    \end{minipage}
    \hfill
    \begin{minipage}[c]{0.45\linewidth}
        \centering
        \captionof{table}{Test error (relative $L_2$, $\times 100$) on Elasticity and Darcy for the two projection choices.}
        \label{tab:transpose_vs_pinv}
        \begin{tabular*}{\linewidth}{@{\extracolsep{\fill}} lcc @{}}
        \toprule
        Projection & Elasticity & Darcy \\
        \midrule
        Stabilized pseudo-inverse & 0.51 & 0.44 \\
        Transpose                 & \textbf{0.50} & \textbf{0.42} \\
        \bottomrule
        \end{tabular*}
    \end{minipage}
\end{figure}

%% file: tables/appendix/lambda_sensitivity.tex
\begin{figure}[h]
    \centering
    \begin{minipage}[c]{0.55\linewidth}
        \centering
        \includegraphics[width=\linewidth]{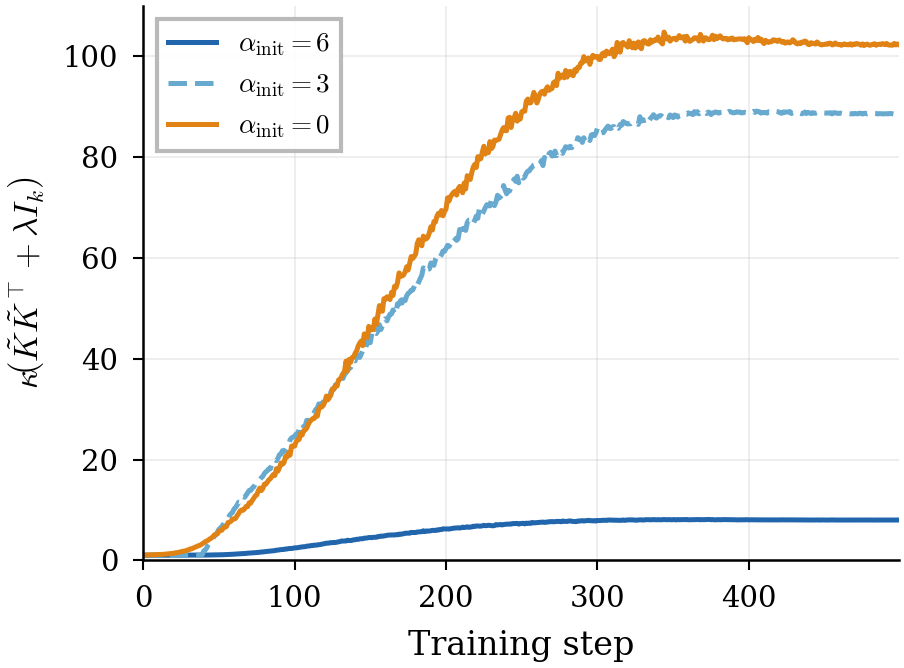}
        \captionof{figure}{Average condition number of the inverted matrix in Eq.~\eqref{eq:full_func_attn} during training on Elasticity, for different initializations of $\alpha$.}
        \label{fig:condition_number}
    \end{minipage}
    \hfill
    \begin{minipage}[c]{0.4\linewidth}
        \centering
        \captionof{table}{Final condition number $\kappa$ and test error (relative $L_2$, $\times 100$) on Elasticity for different initializations of $\alpha$.}
        \label{tab:lambda_sensitivity}
        \begin{tabular*}{\linewidth}{@{\extracolsep{\fill}} ccc @{}}
        \toprule
        \textbf{$\alpha_{\text{init}}$} & \textbf{$\kappa$ (final)} & \textbf{Test Error} \\
        \midrule
        0 & $\sim$100 & \textbf{0.48} \\
        3 & $\sim$90  & 0.49 \\
        6 & $\sim$8   & 0.50 \\
        \bottomrule
        \end{tabular*}
    \end{minipage}
\end{figure}

%% file: sections/appendix/Visualization.tex
\section{Visualization}\label{app:visualization}
\subsection{Basis Visualization}\label{app:basis}
As shown in Fig. \ref{fig:slice_token_visualization}, we visualize the learned basis functions for different models. \fattn\ learns smooth, localized bases that capture regional features. In contrast, Transolver produces highly sparse activations concentrated at scattered points, which may limit its ability to represent smooth solution fields. When we impose orthogonality constraints (Fig. \ref{fig:slice_orthogonal}), the bases become globally supported and resemble Fourier modes, suggesting that explicit regularization encourages the model to recover classical spectral structure. We hypothesize that strict orthogonality over-regularizes the representation, preventing the model from capturing task-specific structure.
\begin{figure}[ht]
  \centering
  \begin{subfigure}[b]{0.32\columnwidth}
    \centering
    \includegraphics[width=\linewidth]{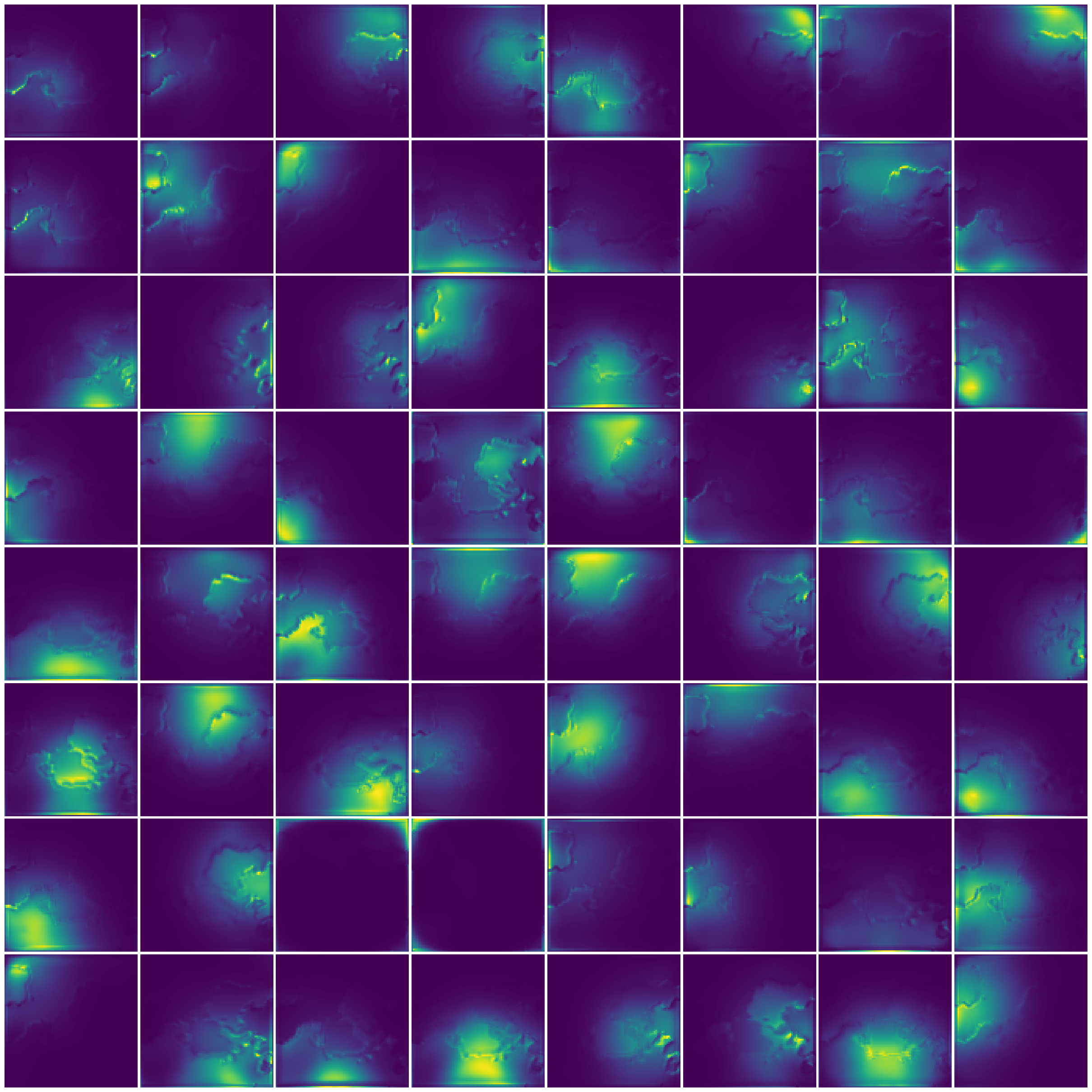}
    \caption{\fattn}
    \label{fig:slice_funcattn}
  \end{subfigure}
  \hfill
  \begin{subfigure}[b]{0.32\columnwidth}
    \centering
    \includegraphics[width=\linewidth]{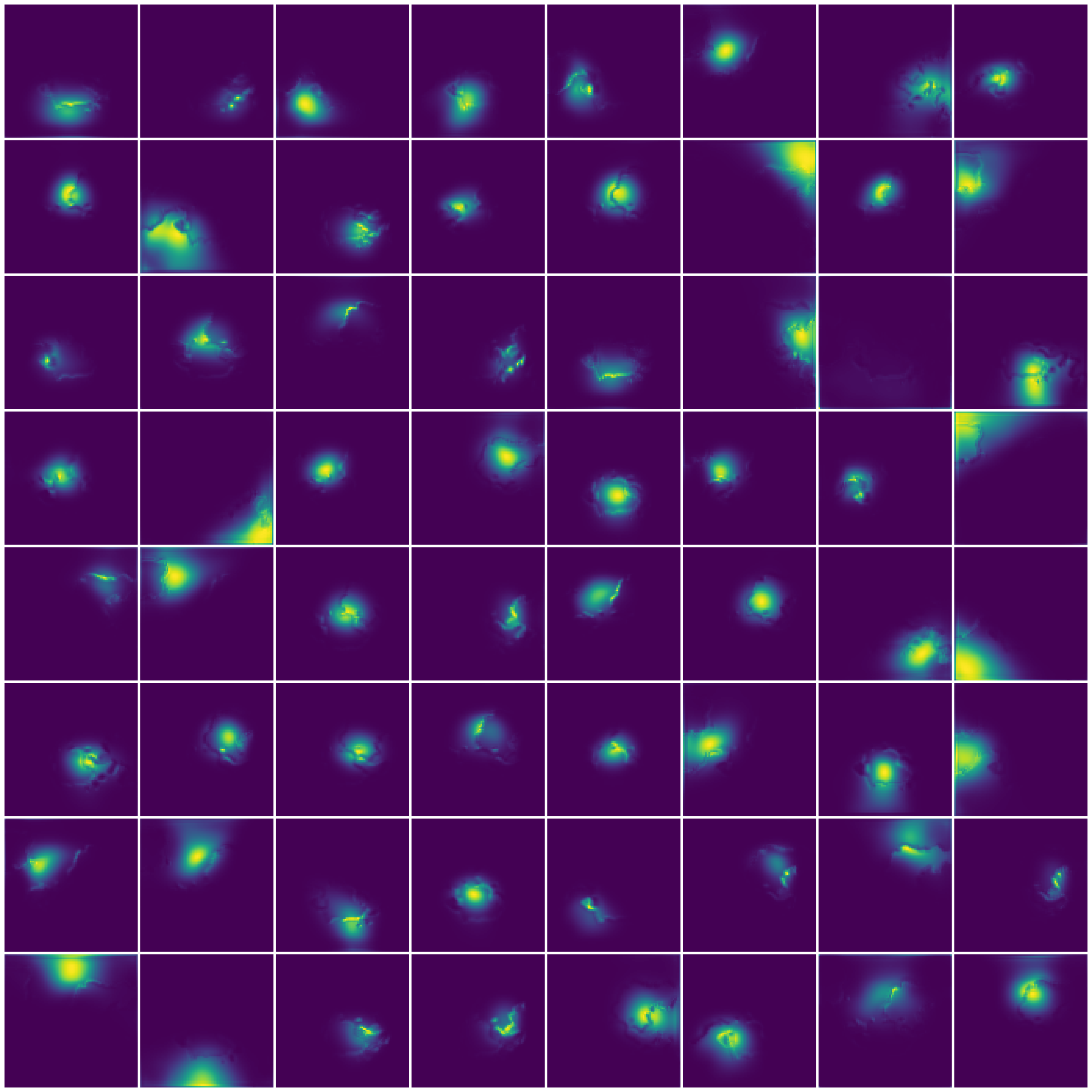}
    \caption{Transolver}
    \label{fig:slice_transolver}
  \end{subfigure}
  \hfill
  \begin{subfigure}[b]{0.32\columnwidth}
    \centering
    \includegraphics[width=\linewidth]{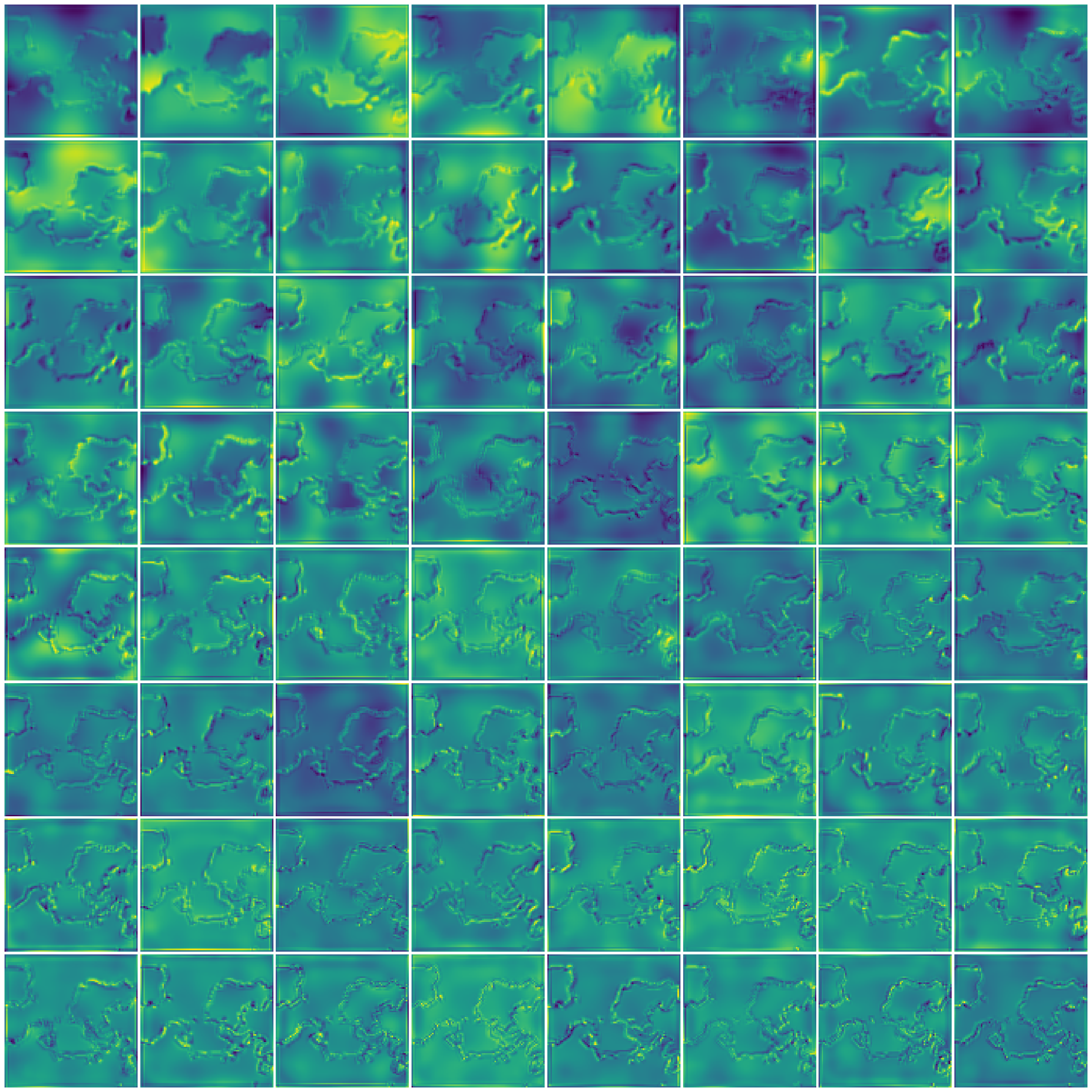}
    \caption{\fattn~with orthogonal basis}
    \label{fig:slice_orthogonal}
  \end{subfigure}
  
  \caption{Visualization of learned basis for different models.}
  \label{fig:slice_token_visualization}
\end{figure}

\subsection{PDE Visualization}\label{app:pde}
We provide qualitative comparisons between \fattn~and Transolver across all six benchmarks in Figs.~\ref{plasticity_visualization}--\ref{pipe_viisualization}. We visualize absolute error maps ($|\hat{g} - g|$) to highlight spatial error distributions, complementing the scalar relative $L_2$ metrics in the main text. This reveals where each model struggles, such as near boundaries or in regions with sharp gradients.
\begin{figure}[htp]
  \centering
  \includegraphics[width=0.98\columnwidth]{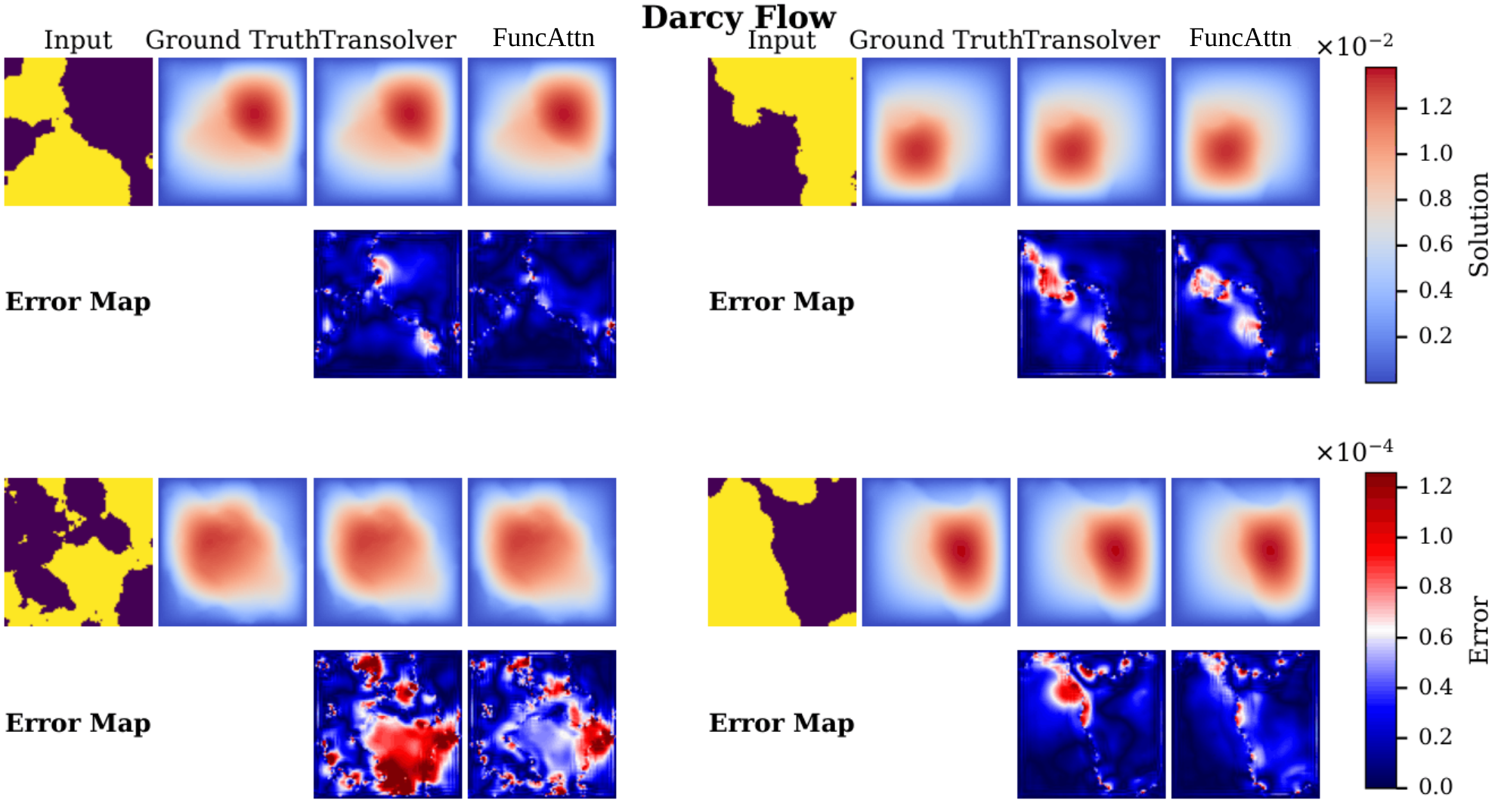}
  
  \vspace{0.2cm}
  
  \includegraphics[width=0.98\columnwidth]{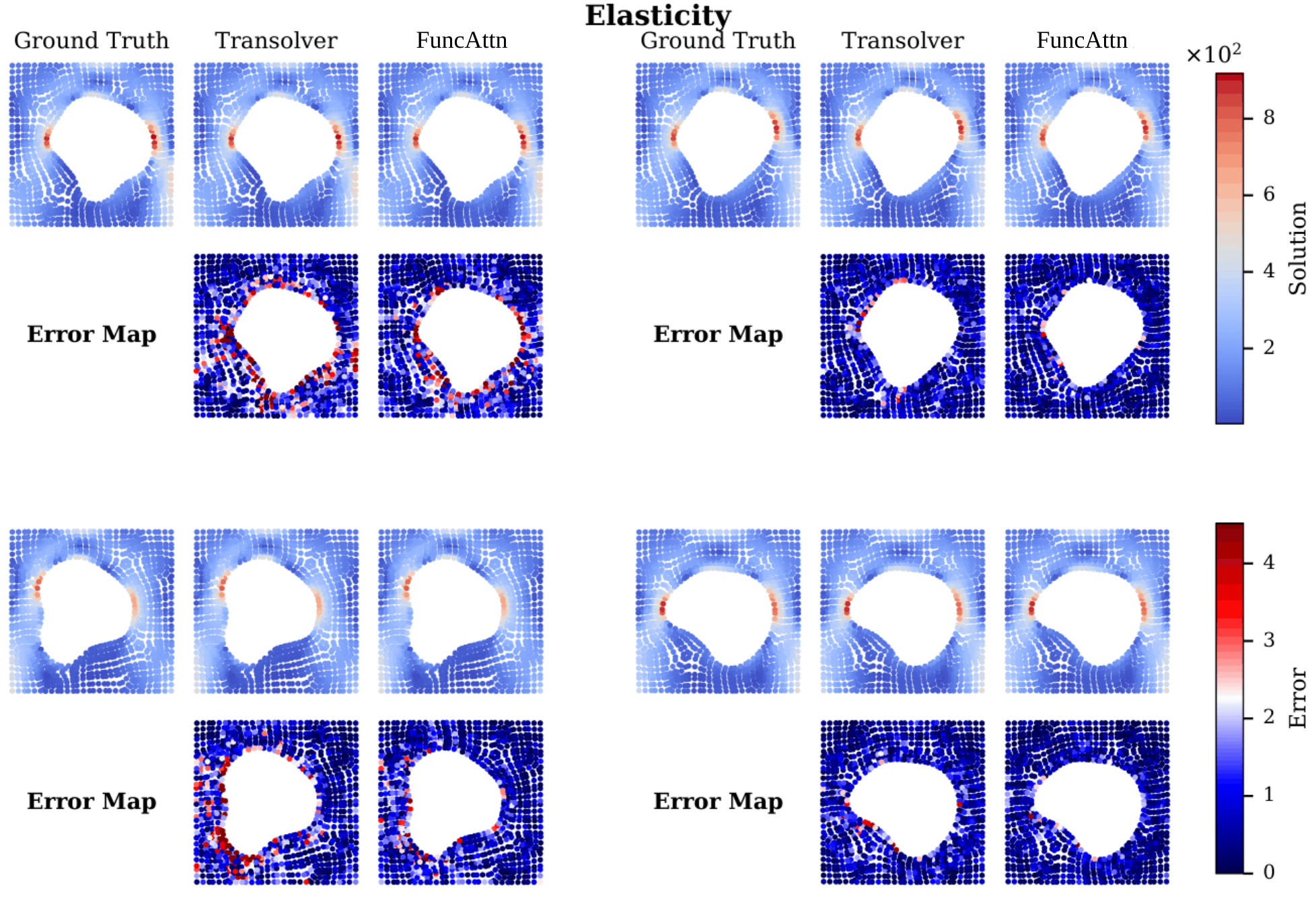}
  \caption{\textbf{Prediction Visualizations.} (Top) Darcy flow solution fields. (Bottom) Elasticity stress fields on irregular meshes. Each shows ground truth, Transolver, and \fattn~with error maps.}
  \label{fig:darcy,elas}
\end{figure}

\begin{figure}[ht]
  \centering
  \includegraphics[width=1\columnwidth]{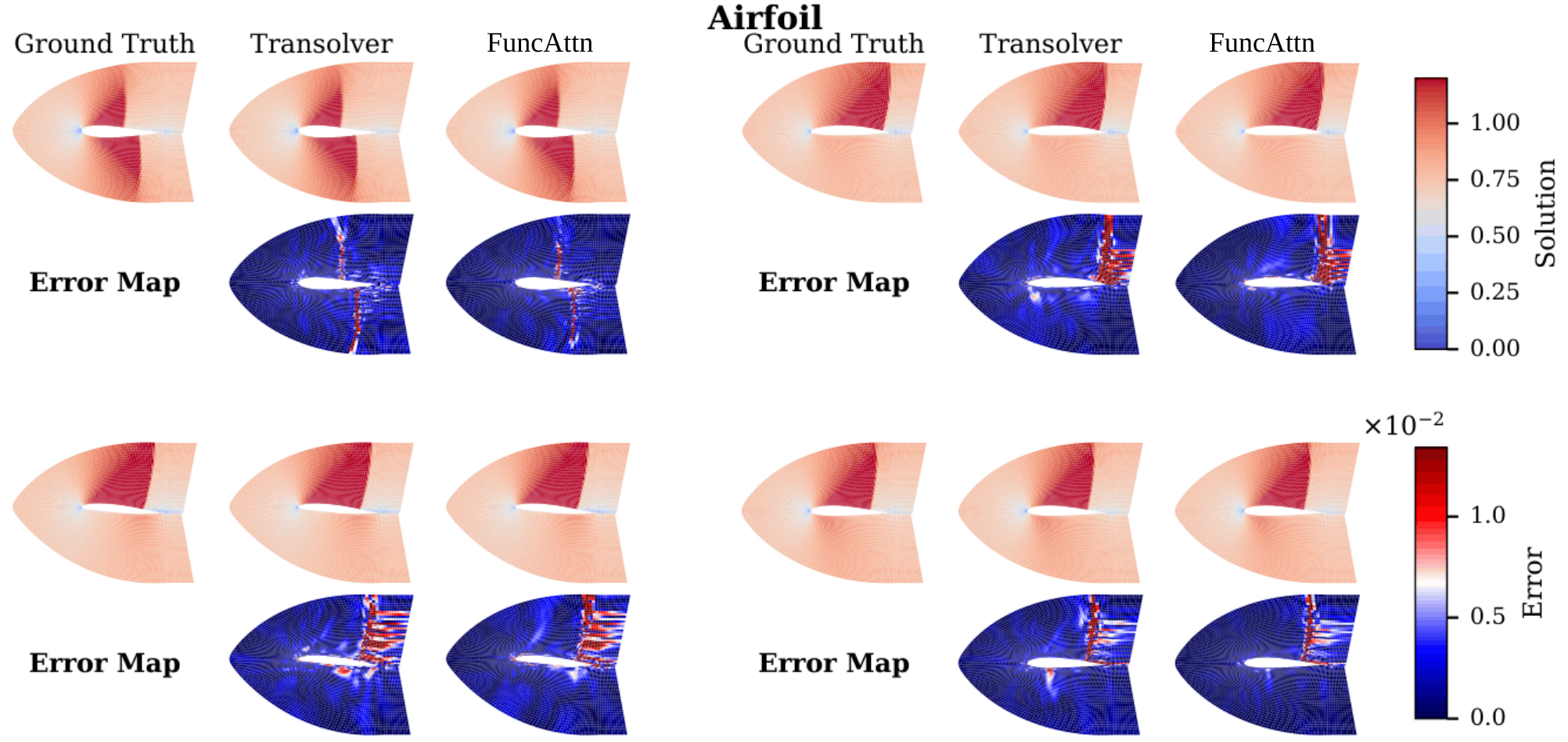}
  
  \vspace{0.3cm}
  
  \includegraphics[width=1\columnwidth]{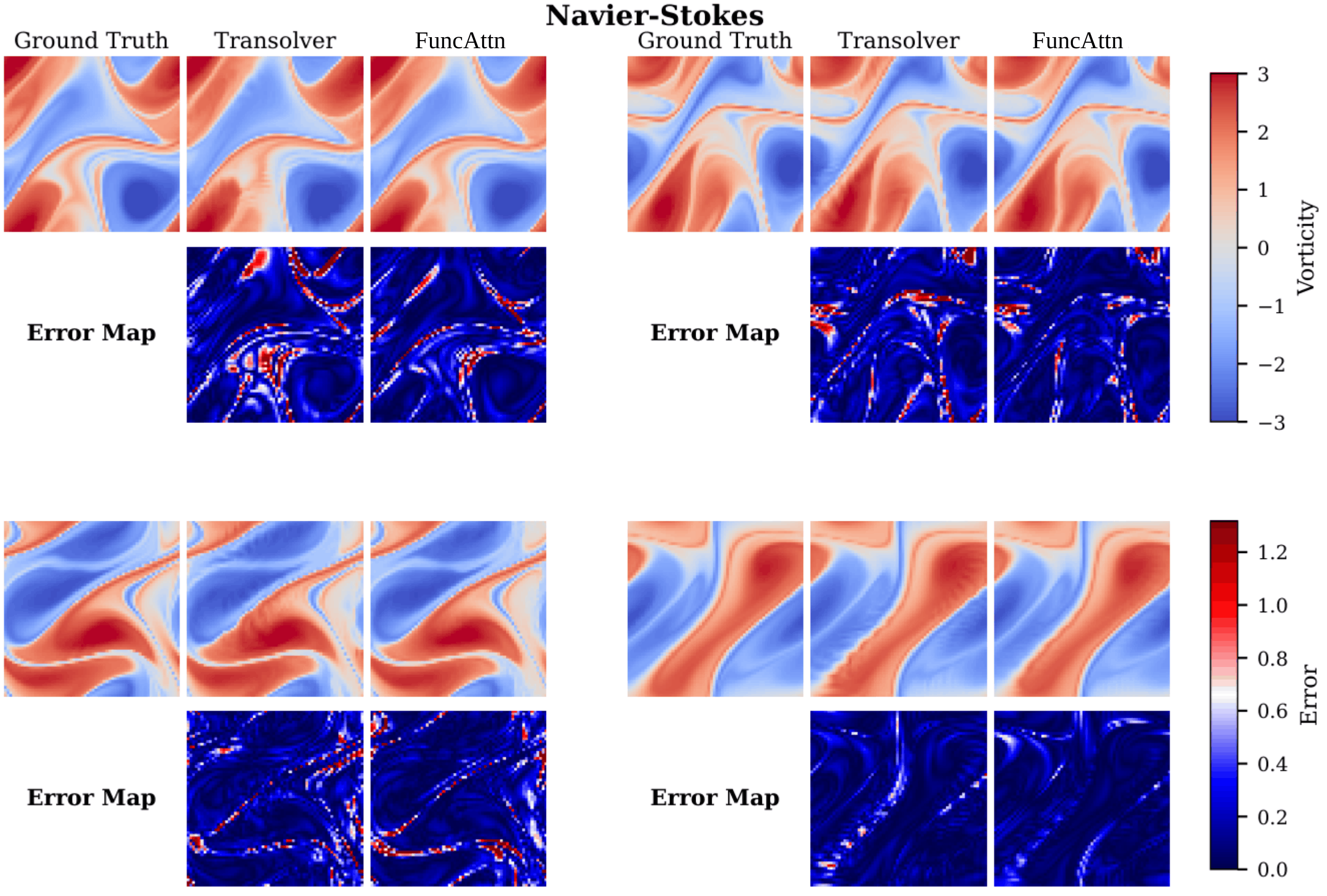}
  \caption{\textbf{Prediction Visualizations.} (Top) Airfoil velocity fields. (Bottom) Navier-Stokes vorticity fields at $t=20$ after rollout.}
  \label{fig:airfoil_ns_visualization}
\end{figure}

\begin{figure}[ht]
  \centering
  \includegraphics[width=1\columnwidth]{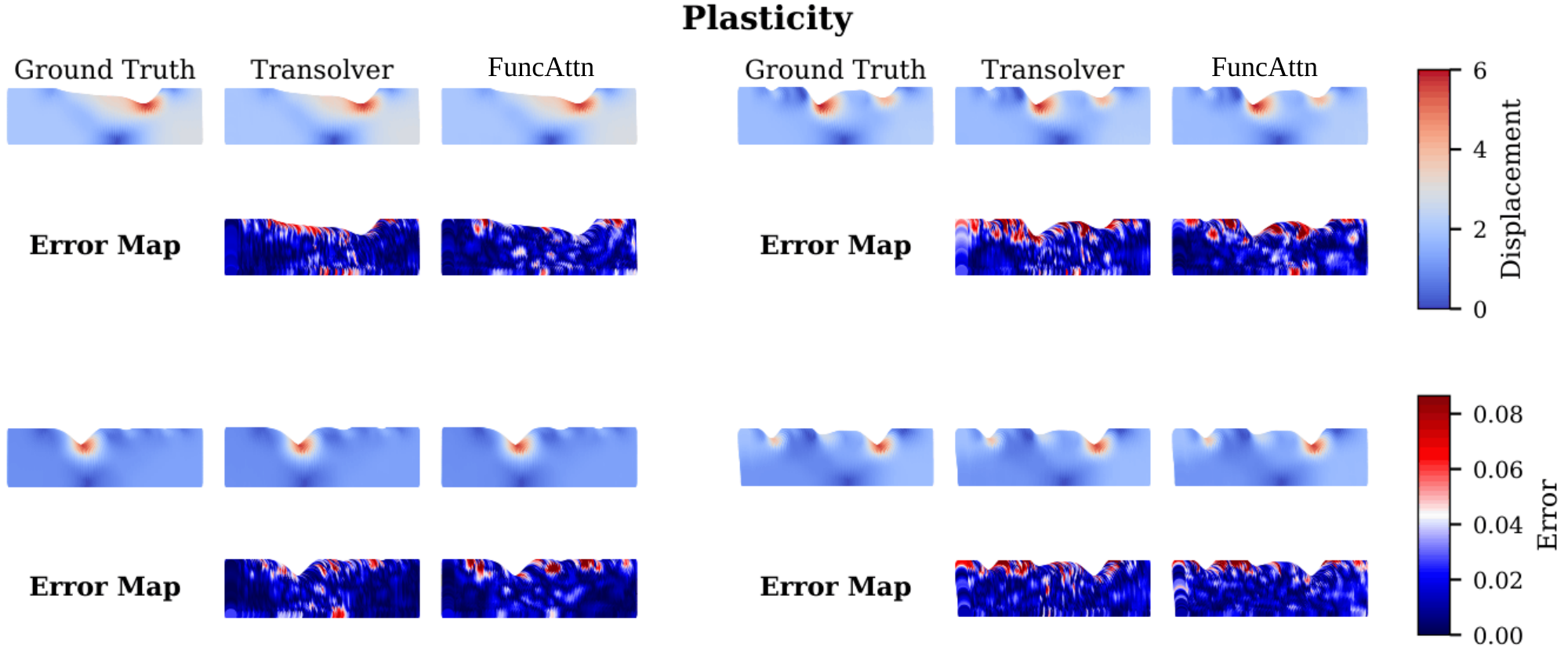}
  \caption{\textbf{Prediction Visualizations.} Plasticity displacement magnitude fields at the final timestep.}
  \label{plasticity_visualization}
\end{figure}

\begin{figure}[ht]
  \centering
  \includegraphics[width=1\columnwidth]{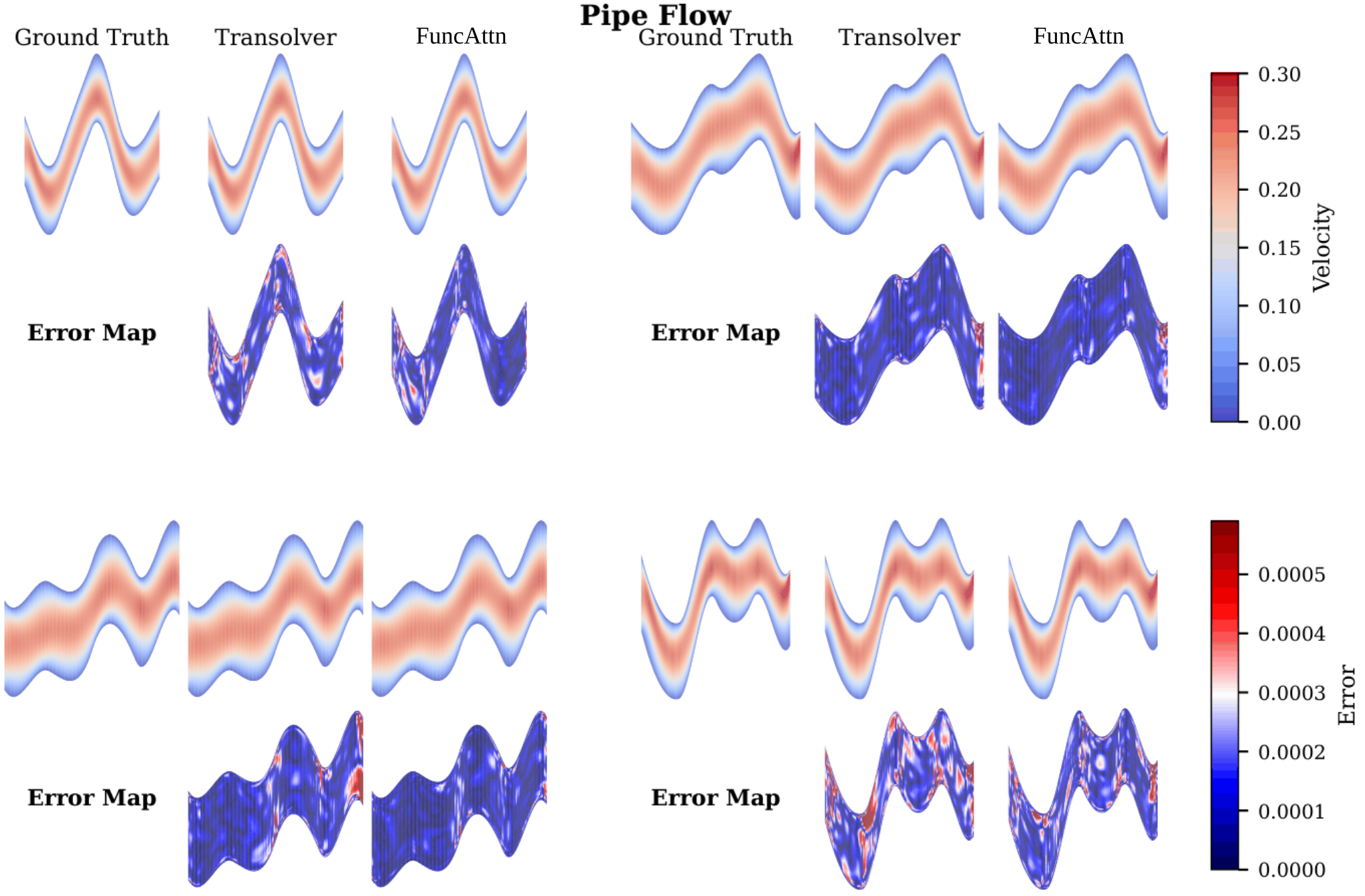}
  \caption{\textbf{Prediction Visualizations.} Pipe flow velocity fields on irregular meshes.}
  \label{pipe_viisualization}
\end{figure}